\definecolor{codegreen}{rgb}{0,0.6,0}
\definecolor{codegray}{rgb}{0.5,0.5,0.5}
\definecolor{codepurple}{rgb}{0.58,0,0.82}
\definecolor{backcolour}{rgb}{0.95,0.95,0.92}
\lstdefinestyle{mystyle}{
  backgroundcolor=\color{backcolour},   commentstyle=\color{codegreen},
  keywordstyle=\color{magenta},
  numberstyle=\tiny\color{codegray},
  stringstyle=\color{codepurple},
  basicstyle=\ttfamily\footnotesize,
  breakatwhitespace=false,         
  breaklines=true,                 
  captionpos=b,                    
  keepspaces=true,                 
  numbers=left,                    
  numbersep=5pt,                  
  showspaces=false,                
  showstringspaces=false,
  showtabs=false,                  
  tabsize=2
}
\newcommand{\reals}{{\mathbb{R}}}
\newcommand{\integers}{{\mathbb{Z}}}
\newcommand{\bfu}[1]{\underline{\bf{#1}}}
\newcommand{\mtx}[1]{\mathbf{#1}}
\newcommand{\vc}[1]{\mathbf{#1}}
\newcommand{\tnsr}[1]{\mathsf{#1}}
\newcommand{\calA}{{\cal A}}
\newcommand{\calW}{{\cal W}}
\newcommand{\calK}{{\cal K}}
\newcommand{\todo}[1]{\textcolor{orange}{#1}}
\newtheorem{theorem}{Theorem}
\newtheorem{lemma}{Lemma}
\newtheorem*{lemma2}{Lemma}
\newtheorem{property}{Property}
\newtheorem*{theorem2}{Theorem}
\newcommand{\ns}[1]{{\color{red}NS: #1}}
\DeclareMathOperator*{\argmax}{arg\,max}
\DeclareMathOperator*{\argmin}{arg\,min}
\title{Generalized Depthwise-Separable Convolutions for Adversarially Robust and Efficient Neural Networks}
\author{%
  Hassan Dbouk \& Naresh R. Shanbhag \\
  Department of Electrical and Computer Engineering\\
  University of Illinois at Urbana-Champaign\\
  Urbana, IL 61801 \\
  \texttt{\{hdbouk2,shanbhag\}@illinois.edu} \\
}
\begin{document}

\maketitle

\begin{abstract}
Despite their tremendous successes, convolutional neural networks (CNNs) incur high computational/storage costs and are vulnerable to adversarial perturbations. Recent works on robust model compression address these challenges by combining model compression techniques with adversarial training. But these methods are unable to improve throughput (frames-per-second) on real-life hardware while simultaneously preserving robustness to adversarial perturbations. To overcome this problem, we propose the method of Generalized Depthwise-Separable (GDWS) convolution -- an \textit{efficient, universal, post-training} approximation of a standard 2D convolution. GDWS dramatically improves the throughput of a standard pre-trained network on real-life hardware while preserving its robustness. Lastly, GDWS is scalable to large problem sizes since it operates on pre-trained models and doesn't require any additional training. We establish the optimality of GDWS as a 2D convolution approximator and present exact algorithms for constructing optimal GDWS convolutions under complexity and error constraints. We demonstrate the effectiveness of GDWS via extensive experiments on CIFAR-10, SVHN, and ImageNet datasets. Our code can be found at \url{https://github.com/hsndbk4/GDWS}.

\end{abstract}

\section{Introduction} \label{sec:intro}

Nearly a decade of research after the release of AlexNet \cite{krizhevsky2012imagenet} in 2012, convolutional neural networks (CNNs) have unequivocally established themselves as the \textit{de facto} classification algorithm for various machine learning tasks \cite{he2016deep,tan2019efficientnet,fastrcnn}. The tremendous success of CNNs is often attributed to their unrivaled ability to extract correlations from large volumes of data, allowing them to surpass human level accuracy on some tasks such as image classification \cite{he2016deep}. 

Today, the deployment of CNNs in safety-critical Edge applications is hindered due to their \textbf{high computational costs} \cite{he2016deep,sakr2017analytical,sakr2018analytical} and their \textbf{vulnerability} to adversarial samples \cite{szegedy2013intriguing,goodfellow2014explaining,ilyas2019adversarial}. Traditionally, those two problems have been addressed in isolation. Recently, very few bodies of works \cite{lin2018defensive,sen2019empir,admm,ATMC,sehwag2020hydra,NAS} have addressed the daunting task of designing both \textbf{efficient} and \textbf{robust} CNNs. A majority of these methods focus on model compression, i.e. reducing the storage requirements of CNNs. None have demonstrated their real-time benefits in hardware. For instance, Fig.~\ref{fig:comp-fps} shows recent robust pruning works HYDRA \cite{sehwag2020hydra} and ADMM \cite{admm} achieve high compression ratios (up to $97\times$) but either \textit{fail} to achieve high throughput measured in frames-per-second (FPS) or \textit{compromise} significantly on robustness. Furthermore, the overreliance of current robust complexity reduction techniques on adversarial training (AT) \cite{trades,madry2018towards} increases their training time significantly (Fig.~\ref{fig:comp-time}). This prohibits their application to complex ImageNet scale problems with stronger attack models, such as union of norm-bounded perturbations \cite{maini2020adversarial}. Thus, there is critical need for methods to design deep nets that are both adversarially robust and achieve high throughput when mapped to real hardware. 

To address this need, we propose \textbf{Generalized Depthwise-Separable (GDWS)} convolutions, a \textit{universal post-training} approximation of a standard 2D convolution that dramatically improves the real hardware FPS of pre-trained networks (Fig.~\ref{fig:comp-fps}) while preserving their robust accuracy. Interestingly, we find GDWS applied to un-pruned robust networks  simultaneously achieves higher FPS and higher robustness than robust pruned models obtained from current methods. This in spite of GDWS's compression ratio being smaller than those obtained from robust pruning methods. Furthermore, GDWS easily scales to large problem sizes since it operates on pre-trained models and doesn't require any additional training.

\textbf{Our contributions}:
\begin{enumerate}
    \item We propose GDWS, a novel convolutional structure that can be seamlessly mapped onto off-the-shelf hardware and accelerate pre-trained CNNs significantly while maintaining robust accuracy.
    \item We show that the error-optimal and complexity-optimal GDWS approximations of any pre-trained standard 2D convolution can be obtained via greedy polynomial time algorithms, thus eliminating the need for any expensive training.
    \item We apply GDWS to a variety of networks on CIFAR-10, SVHN, and ImageNet to simultaneously achieve higher robustness \textit{and} higher FPS than existing robust complexity reduction techniques, while incurring no extra training cost. 
    \item We demonstrate the versatility of GDWS by using it to design efficient CNNs that are robust to union of $(\ell_\infty,\ell_2,\ell_1)$ perturbation models. To the best of our knowledge, this is the first work that proposes efficient and robust networks to the union of norm-bounded perturbation models.
\end{enumerate}
\begin{figure}[t]
  \centering
    \subfloat[]{\includegraphics[height=5.5cm]{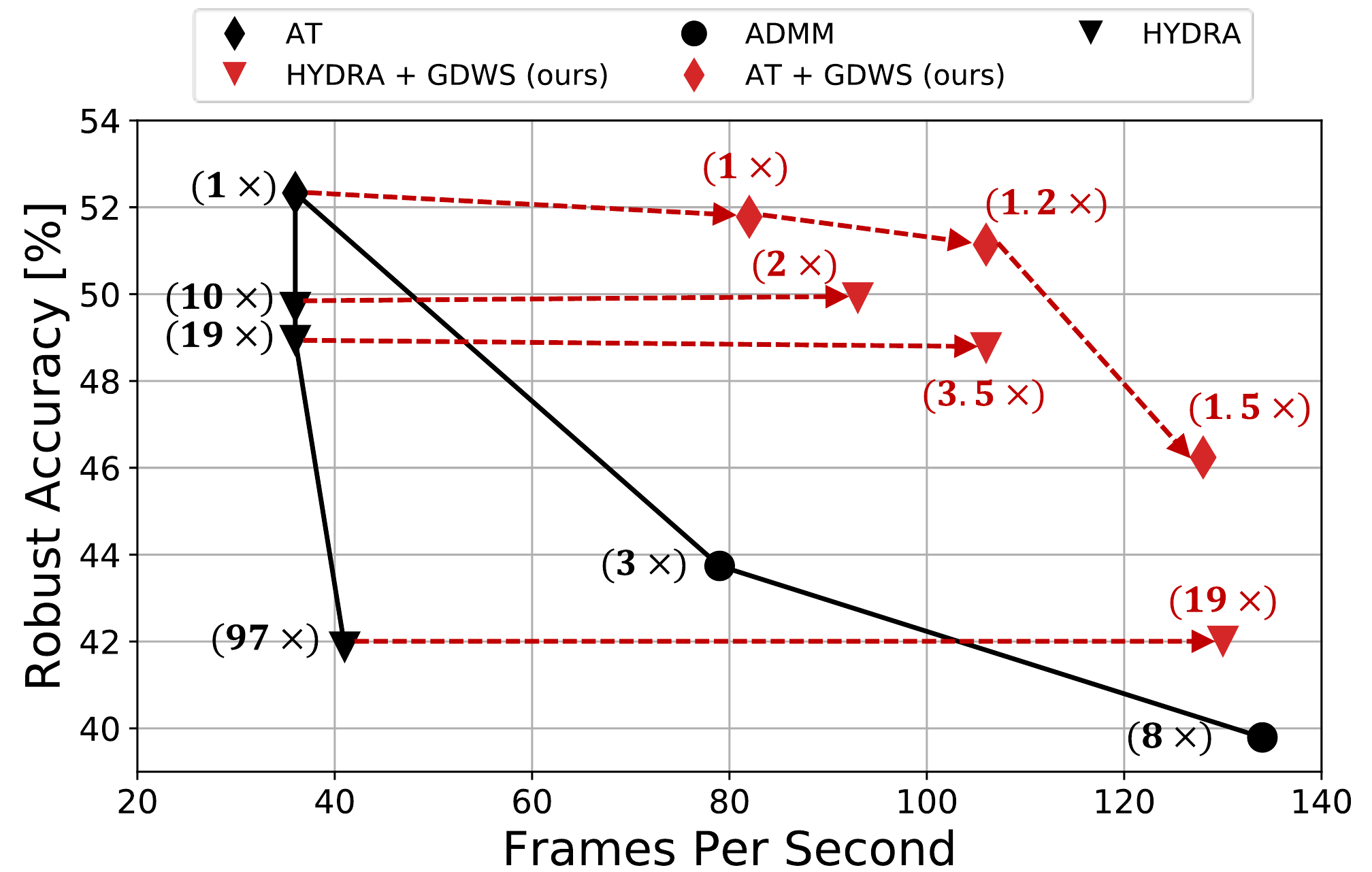}\label{fig:comp-fps}}%
    \qquad%
    \subfloat[]{\includegraphics[height=5.5cm]{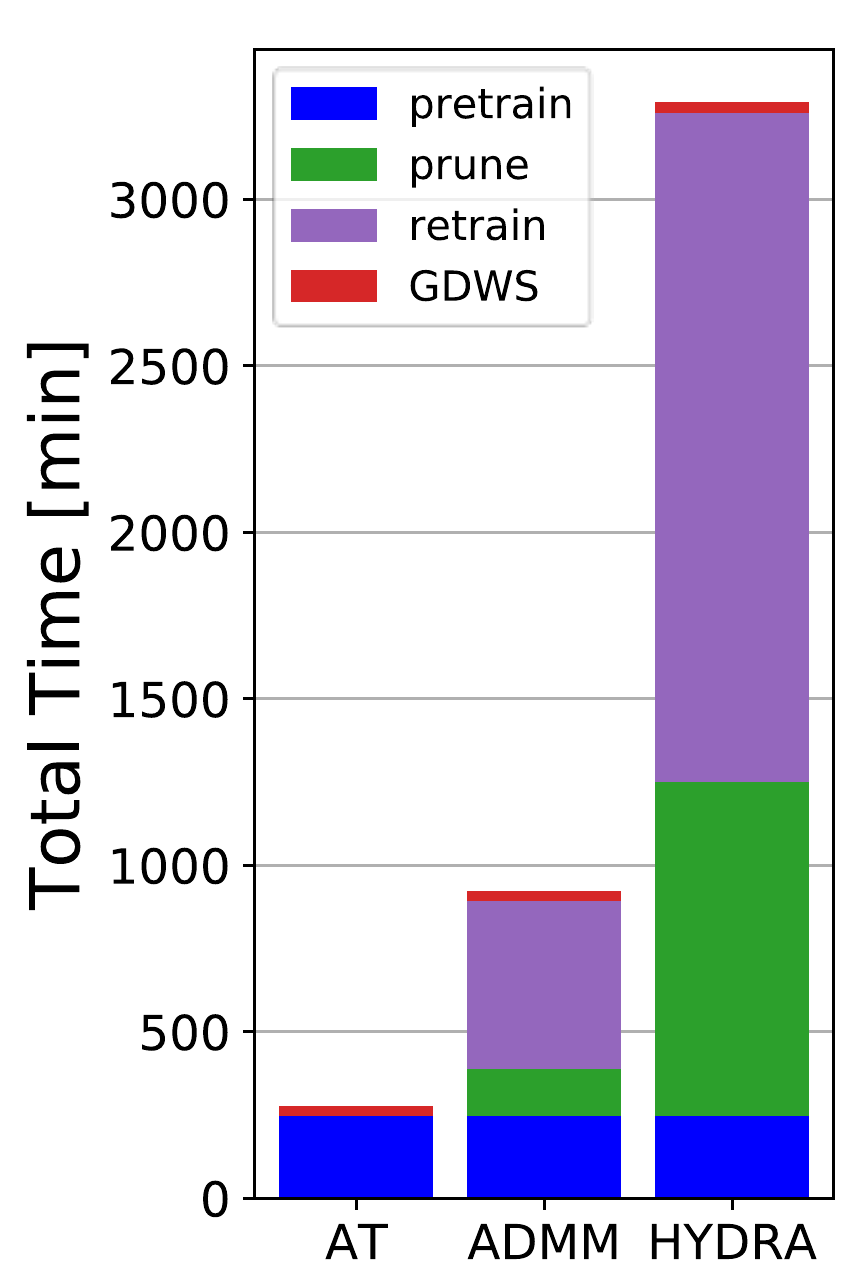}\label{fig:comp-time}}%
    \label{fig:1}
  \caption{Performance of existing robust pruning works (HYDRA \cite{sehwag2020hydra} and ADMM \cite{admm}) and the proposed GDWS with VGG-16 on CIFAR-10, captured by: (a) robust accuracy against $\ell_\infty$-bounded perturbations vs frames-per-second measured on an NVIDIA Jetson Xavier, and (b) total time required to implement these methods measured on a single NVIDIA 1080 Ti GPU. To ensure a fair comparison, the same AT baseline (obtained from \cite{sehwag2020hydra}) is used for all methods. The compression ratio of each method, highlighted in parenthesis, is with respect to the AT baseline.}
\end{figure}

\section{Background and Related Work}

The problem of designing \textbf{efficient} and \textbf{robust} CNNs, though crucial for safety-critical Edge applications, is not yet well understood. Very few recent works have addressed this problem \cite{lin2018defensive, sen2019empir, admm,ATMC, sehwag2020hydra, NAS}. We cluster prior works into the following categories:

\textbf{Quantization} Reducing the complexity of CNNs via model quantization in the absence of any adversary is a well studied problem in the deep learning literature \cite{sakr2017analytical,sakr2018analytical,choi2018pact,zhang2018lq,hubara2016binarized,rastegari2016xnor,dbouk2020dbq}. The role of quantization on adversarial robustness was studied in Defensive Quantization (DQ) \cite{lin2018defensive} where it was observed that conventional \textit{post-training} fixed-point quantization makes networks more \textit{vulnerable} to adversarial perturbations than their full-precision counterparts. EMPIR \cite{sen2019empir} also leverages extreme model quantization (up to 2-bits) to build an ensemble of efficient and robust networks. However, \cite{tramer2020adaptive} broke EMPIR by constructing attacks that fully leverage the model structure, i.e., adaptive attacks.  In contrast, GDWS is an orthogonal complexity reduction technique that preserves the base model's adversarial robustness and can be applied in conjunction with model quantization. 

\textbf{Pruning} The goal of pruning is to compress neural networks by zeroing out unimportant weights \cite{han2015deep,guo2016dynamic,zhang2018systematic,yang2017designing}. The structured pruning method in \cite{admm} combines the alternating direction method of multipliers (ADMM) \cite{zhang2018systematic} for parameter pruning within the AT framework \cite{madry2018towards} to design pruned and robust networks. The flexibility of ADMM enables it to achieve a high FPS on Jetson (as seen in Fig.~\ref{fig:comp-fps}) but suffers from a significant drop in robustness. ATMC \cite{ATMC} augments the ADMM framework \cite{admm} with model quantization and matrix factorization to further boost the compression ratio. On the other hand, unstructured pruning methods such as HYDRA \cite{sehwag2020hydra} prunes models via important score optimization \cite{ramanujan2020s}. 
However, HYDRA's high pruning ratios ($>90\%$) doesn't translate into real-time FPS improvements on off-the-shelf hardware and often requires custom hardware design to fully leverage their capabilities \cite{han2016eie}. GDWS is complementary to unstructured pruning methods, e.g., when applied to HYDRA, GDWS boosts the achievable FPS and achieves much higher robustness at iso-FPS when compared to structured (filter) pruning ADMM. 

\textbf{Neural Architecture Search} Resource-efficient CNNs can be designed by exploiting design intuitions such as depthwise separable (DWS) convolutions \cite{howard2017mobilenets,sandler2018mobilenetv2,huang2018condensenet,zhang2018shufflenet,iandola2016squeezenet,tan2019efficientnet}. While neural architecture search (NAS) \cite{zoph2018learning,real2019regularized} automates the process, it requires massive compute resources, e.g., thousands of GPU hours for a single network. Differentiable NAS \cite{liu2018darts} and one-shot NAS \cite{bender2018understanding} drastically reduce the cost of this search. In \cite{NAS}, a one-shot NAS framework \cite{bender2018understanding} is combined with the AT framework \cite{madry2018towards} to search for robust network architectures, called RobNets. RobNets achieve slightly higher robustness than existing networks with less storage requirements. In this work, we show that applying GDWS to existing architectures, e.g., WideResNet-28-4, achieves significantly higher FPS than RobNet, at iso-robustness and model size. 


\section{Generalized Depthwise-Separable Convolutions} \label{sec:gdws}
In this section, we introduce GDWS convolutions and develop error-optimal and complexity-optimal  GDWS approximations of standard 2D convolution. These optimal approximations are then employed to construct GDWS networks from any pre-trained robust CNN built from standard 2D convolutions. 

\textbf{Notation:} A $(C,K,M)$ standard 2D convolution operates on an input feature map $\tnsr{X}\in\reals^{C\times H\times W}$ via $M$ filters (also referred to as kernels or output channels) each consisting of $C$ channels each of dimension $K\times K$ to generate an output feature map $\tnsr{Y}\in\reals^{M\times H'\times W'}$. 

\textbf{2D Convolution as Matrix Multiplication:} The $M$ filters can be viewed as vectors $\{\vc{w}_i\}_{i=1}^M\in\reals^{CK^2}$ obtained by vectorizing the $K^2$ elements within a channel and then across the $C$ channels. The resulting weight matrix $\mtx{W} \in \reals^{M\times CK^2}$ is constructed by stacking these filter vectors, i.e., $\mtx{W} = [\vc{w}_1 | \vc{w}_2 | ... | \vc{w}_M]^\text{T}$. 

From an operational viewpoint, the matrix $\mtx{W}$ can be used to compute the 2D convolution via Matrix Multiplication (MM)  with the input matrix $\mtx{X}=\Psi(\tnsr{X}) \in \reals^{CK^2\times H'W'}$:
\begin{equation} \label{eq:conv-eq}
    \mtx{Y} = \mtx{W}\mtx{X} = \mtx{W}\Psi(\tnsr{X})
\end{equation}
where $\Psi$ is an unrolling operator that generates all $H'W'$ input feature map slices and stacks them in matrix format. The resultant output matrix $\mtx{Y}\in\reals^{M\times H'W'}$ can be reshaped via the operator $\Phi$ to retrieve $\tnsr{Y}=\Phi(\mtx{Y})$. The computational complexity of \eqref{eq:conv-eq} in terms of multiply-accumulate (MAC) operations is given by:
\begin{equation}
    H'W'MCK^2
\end{equation}
The reshaping operators $\Phi$ and $\Psi$ are only for notational convenience and are computation-free.
\subsection{GDWS Formulation}

\begin{figure}[t]
  \centering
    \includegraphics[width=\columnwidth]{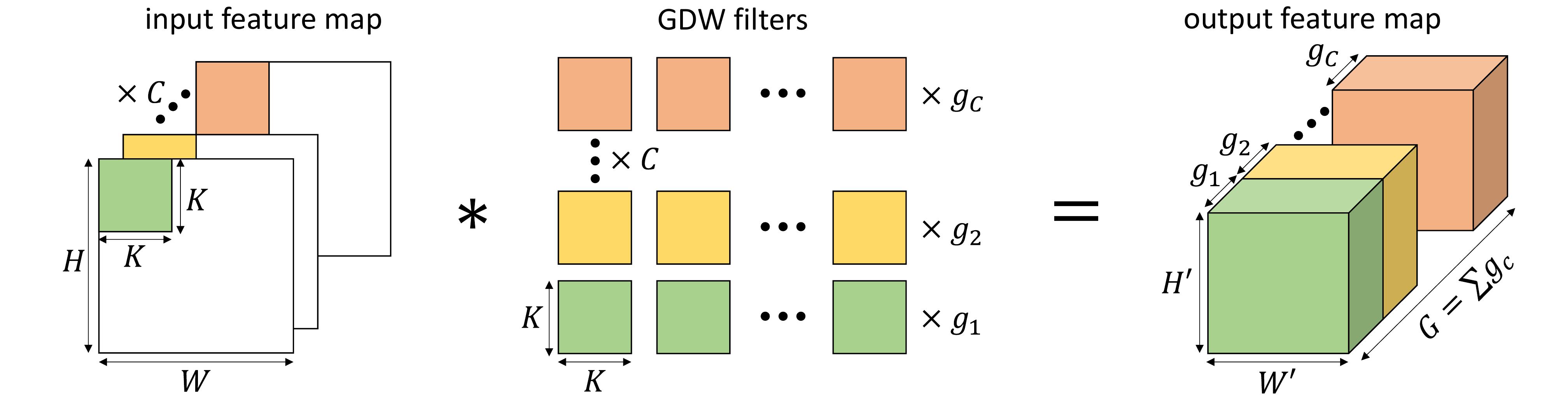}

  \caption{The $(C,K,\vc{g})$ generalized depthwise (GDW) convolution operation. A standard depthwise (DW) convolution is obtained by setting $g_c=1\ \forall c\in [C]$.}    \label{fig:gdw}
\end{figure}
\textbf{Definition:} 
A $(C,K,\vc{g},M)$ GDWS convolution is parameterized by the \emph{channel distribution vector} $\vc{g}\in\integers_+^{C}$ in addition to the parameters $(C,K,M)$ of a standard 2D convolution. A GDWS convolution is composed of a $(C,K,\vc{g})$ \emph{Generalized Depthwise} (GDW) convolution and a $(G,1,M)$ standard pointwise (PW) convolution where $G=\sum g_c$ with $c\in[C]$\footnote{we use the notation $[C] = \{1, 2, ..., C\}$ for brevity.}.

A $(C,K,\vc{g})$ GDW convolutional layer (Fig.~\ref{fig:gdw}) operates on an input feature map $\tnsr{X}\in \reals^{C\times H\times W}$ by convolving the $c^\text{th}$ channel with $g_c\in\integers_+$ depthwise $K\times K$ filters to produce a total of $G$ intermediate output channels each of size $H'\times W'$. The $(G,1,M)$ PW layer operates on the intermediate output feature map by convolving it with $M$ filters of size $1\times 1$, thus producing the output feature map $\tnsr{Y} \in \reals^{M\times H' \times W'}$. 

\textbf{Relation to DWS:} Setting $g_c = 1\ \forall c\in [C]$ reduces the GDWS convolution to the standard DWS convolution popularized by \cite{howard2017mobilenets}. Thus, GDWS generalizes DWS by allowing for more than one ($g_c\geq1$) depthwise filters per channel. This simple generalization relaxes DWS's highly constrained structure enabling accurate approximations of the 2D convolution. Thus, GDWS when applied to pre-trained models preserves its original behavior and therefore its natural and robust accuracy. Furthermore, GDWS achieves high throughput since it exploits the same hardware features that enable networks with DWS to be implemented efficiently. One might ask: why not use DWS on pre-trained models? Doing so will result in very high approximation errors. In fact, in Section~\ref{ssec:results}, we show that applying GDWS to a pre-trained complex network such as ResNet-18 achieves better robust accuracy than MobileNet trained from scratch, while achieving similar FPS.

\textbf{GDWS Complexity:} The total number of MAC operations required by GDWS convolutions is:
\begin{equation}\label{eq:gdws-comp}
    H'W'\Big(\sum_{c=1}^Cg_c(K^2 + M)\Big) = H'W'G(K^2+M)
\end{equation}
Thus, replacing standard 2D convolutions with GDWS convolutions results in a complexity reduction by a factor of $\frac{G(K^2+M)}{CK^2M}$.

\subsection{Properties of GDWS Convolutions}
We present properties of the GDWS weight matrix $\mtx{W}$ that will be vital for developing the optimal approximation procedures.
\begin{property}\label{prop:gdws_mtx} The weight matrix $\mtx{W}\in \reals^{M\times CK^2}$ of a $(C,K,\vc{g},M)$ GDWS convolution can be expressed as:
\begin{equation}
    \mtx{W} = \mtx{W}_{\text{\normalfont P}}\mtx{W}_{\text{ \normalfont D}}
\end{equation}
where $\mtx{W}_{\text{\normalfont P}} \in \reals^{M\times G}$ and $\mtx{W}_{\text{\normalfont D}}\in \reals^{G\times CK^2}$ are the weight matrices of the PW and GDW convolutions, respectively. 
\end{property}
Property~\ref{prop:gdws_mtx} implies that any GDWS convolution has an equivalent 2D convolution whose weight matrix is the product of $\mtx{W}_\text{P}$ and $\mtx{W}_\text{D}$, where $\mtx{W}_\text{P}$ is a regular convolution weight matrix with $K=1$ and $\mtx{W}_\text{D}$ has the following property:
\begin{property}\label{prop:gdw_mtx} The weight matrix $\mtx{W}_\text{\normalfont D}\in \reals^{G\times CK^2}$ of a $(C,K,\vc{g})$ GDW convolution has a block-diagonal structure. Specifically, $\mtx{W}_\text{\normalfont D}$ is a concatenation of $C$ sub-matrices where each sub-matrix $\mtx{W}_{\text{\normalfont D},c} \in \reals^{G\times K^2}$ has at most $g_c$ non-zero consecutive rows, starting at row index $1+\sum_{k=1}^{c-1}g_k$ as shown in Fig.~\ref{fig:gdw_mtx}. 
\end{property}
This structure is due to the fact that input channels are convolved independently with at most $g_c$ depthwise filters per channel.
Finally, let $\mtx{W} = [\mtx{W}_1|\mtx{W}_2| ...| \mtx{W}_C ]$ be represented as the concatenation of $C$ sub-matrices, then combining Properties~\ref{prop:gdws_mtx} \& \ref{prop:gdw_mtx} establishes the following lemma:
\begin{lemma}\label{lemm:gdws_mtx_concat} The weight matrix of a $(C,K,\vc{g},M)$ GDWS convolution can be expressed as the concatenation of $C$ sub-matrices $\mtx{W}_c$ where $\text{\normalfont rank}(\mtx{W}_c) \leq \text{\normalfont{min}}(g_c,K^2)\ \forall c\in[C]$.
\end{lemma}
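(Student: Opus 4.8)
The plan is to combine the two structural properties multiplicatively. First I would invoke Property~\ref{prop:gdws_mtx} to write $\mtx{W} = \mtx{W}_{\text{P}}\mtx{W}_{\text{D}}$, and then partition $\mtx{W}_{\text{D}}$ along its columns exactly as in Property~\ref{prop:gdw_mtx}, namely $\mtx{W}_{\text{D}} = [\mtx{W}_{\text{D},1}|\cdots|\mtx{W}_{\text{D},C}]$ with each block $\mtx{W}_{\text{D},c}\in\reals^{G\times K^2}$. Because matrix multiplication distributes over a column-wise concatenation of the right factor, the product inherits the same partition, $\mtx{W} = [\mtx{W}_{\text{P}}\mtx{W}_{\text{D},1}|\cdots|\mtx{W}_{\text{P}}\mtx{W}_{\text{D},C}]$. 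Identifying $\mtx{W}_c = \mtx{W}_{\text{P}}\mtx{W}_{\text{D},c}$ then gives precisely the concatenation $\mtx{W}=[\mtx{W}_1|\cdots|\mtx{W}_C]$ asserted in the statement, so the claim reduces to a rank bound on each individual block $\mtx{W}_c$.

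Next I would bound $\text{rank}(\mtx{W}_c)$ using the standard rank inequality for products, $\text{rank}(\mtx{W}_{\text{P}}\mtx{W}_{\text{D},c}) \leq \min\big(\text{rank}(\mtx{W}_{\text{P}}),\text{rank}(\mtx{W}_{\text{D},c})\big) \leq \text{rank}(\mtx{W}_{\text{D},c})$. It then suffices to show $\text{rank}(\mtx{W}_{\text{D},c}) \leq \min(g_c, K^2)$, which follows directly from Property~\ref{prop:gdw_mtx}: the block $\mtx{W}_{\text{D},c}$ has at most $g_c$ nonzero rows, forcing its rank to be at most $g_c$, while it has exactly $K^2$ columns, forcing its rank to be at most $K^2$. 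Taking the minimum and chaining the two inequalities yields $\text{rank}(\mtx{W}_c)\leq\min(g_c,K^2)$ for every $c\in[C]$.

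I do not expect any substantive obstacle here, since the argument is purely structural and uses only elementary facts about rank. The single point requiring care is the bookkeeping: one must verify that the column-partition of $\mtx{W}_{\text{D}}$ into $K^2$-column blocks is the very partition that, after left-multiplication by $\mtx{W}_{\text{P}}$, reproduces the sub-matrices $\mtx{W}_c$ named in the lemma. Once the two properties are aligned in this way, the passage from the block-diagonal sparsity of $\mtx{W}_{\text{D}}$ to the per-channel rank ceiling on $\mtx{W}$ is immediate.
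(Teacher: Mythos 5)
Your proof is correct and follows essentially the same route as the paper's: both factor $\mtx{W}_c = \mtx{W}_{\text{P}}\mtx{W}_{\text{D},c}$ via Property~\ref{prop:gdws_mtx} and then exploit the row-sparsity of $\mtx{W}_{\text{D},c}$ from Property~\ref{prop:gdw_mtx}. The only cosmetic difference is that the paper makes the rank bound explicit by writing $\mtx{W}_c$ as a sum of $g_c$ rank-one outer products $\vc{u}_i\vc{v}_{i,c}^{\text{T}}$, whereas you invoke the product rank inequality together with $\text{rank}(\mtx{W}_{\text{D},c})\le\min(g_c,K^2)$ --- two presentations of the same argument.
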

The reason for this is that each sub-matrix $\mtx{W}_c$ can be expressed as the sum of $g_c$ rank 1 matrices of size $M\times K^2$. A detailed proof of Lemma~\ref{lemm:gdws_mtx_concat} can be found in the Appendix. A major implication of Lemma~\ref{lemm:gdws_mtx_concat} is that any 2D standard convolution is equivalent to a GDWS convolution with $g_c =K^2$ $\forall c$\footnote{Typical CNNs satisfy $K^2<M$, e.g., $K=3$ and $M\geq 16$}. Hence, in the rest of this paper we will assume $g_c \leq K^2$ when we approximate 2D convolutions with GDWS.
\begin{figure}[t]
  \centering
    \includegraphics[width=0.9\columnwidth]{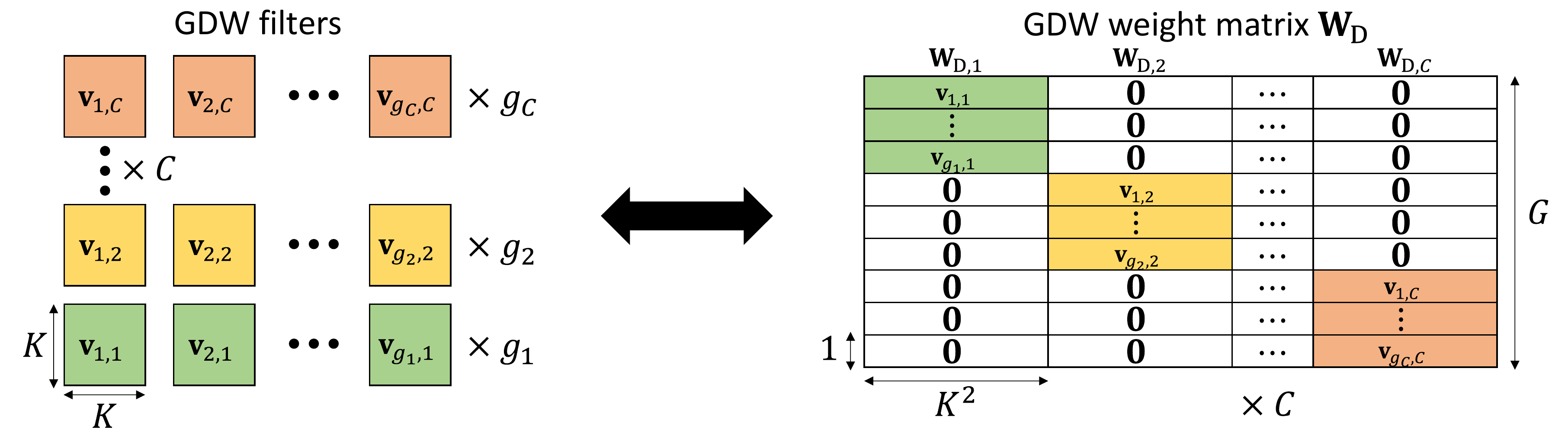}

  \caption{The weight matrix representation of a GDW convolution operation. All the $\vc{v}_{i,j}$ vectors are row vectors with $K^2$ elements.}
      \label{fig:gdw_mtx}
\end{figure}

\subsection{Optimal GDWS Approximation Methods}
We wish to approximate a standard 2D convolution with weight matrix $\mtx{W}\in \reals^{M\times CK^2}$ with a GDWS convolution with weight matrix $\mtx{Q}\in \reals^{M\times CK^2}$ to minimize the weighted approximation error defined as:
\begin{equation}\label{eq:err-exp}
    e(\mtx{W},\mtx{Q},\bm{\alpha}) = \sqrt{\sum_{c=1}^C \alpha_c||\mtx{W}_c-\mtx{Q}_c||^2_\text{F}}
\end{equation}
where $||.||_\text{F}$ denotes the Frobenius norm of a matrix and $\bm{\alpha}\in\reals_+^C$ is a vector of positive weights. Setting $\alpha_c=1\ \forall c$ simplifies \eqref{eq:err-exp} to the Frobenius norm of the error matrix $\mtx{W}-\mtx{Q}$. Furthermore, from \eqref{eq:gdws-comp}, one can upper bound the complexity of the GDWS approximation $\mtx{Q}$ via an upper bound on $G=\sum g_c$ where the $g_c$'s are obtained from Lemma~\ref{lemm:gdws_mtx_concat}. 

Based on the GDWS properties and Lemma~\ref{lemm:gdws_mtx_concat}, we state the following error-optimal approximation theorem:
\begin{theorem}\label{thm:approximation_comp} Given a $(C,K,M)$ standard 2D convolution with weight matrix $\mtx{W}$, the $(C,K,\vc{g},M)$ GDWS approximation with weight matrix $\hat{\mtx{W}}$ that minimizes the error in \eqref{eq:err-exp} subject to $\sum g_c = G\leq \gamma$ (for some $\gamma \in \integers_+$), can be obtained in polynomial time via Algorithm~\ref{alg:approx1}. 
\end{theorem}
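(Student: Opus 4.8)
The plan is to reduce the GDWS approximation problem to a collection of independent low-rank matrix approximations, one per input channel, and then to an integer resource-allocation problem that admits a greedy solution. First I would invoke Lemma~\ref{lemm:gdws_mtx_concat}: a matrix $\mtx{Q}$ is realizable as the weight matrix of a $(C,K,\vc{g},M)$ GDWS convolution if and only if it decomposes as a concatenation $\mtx{Q} = [\mtx{Q}_1 | \cdots | \mtx{Q}_C]$ with $\text{rank}(\mtx{Q}_c) \leq g_c$ for each $c$ (using the standing assumption $g_c \leq K^2$, and the fact that every such $\mtx{Q}_c$, being a sum of $g_c$ rank-one $M\times K^2$ matrices, corresponds to $g_c$ depthwise filters composed with pointwise weights). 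Because the error in \eqref{eq:err-exp} is a weighted sum over the blocks, $e^2 = \sum_{c=1}^C \alpha_c \|\mtx{W}_c - \mtx{Q}_c\|_\text{F}^2$, and because the blocks are constrained and weighted independently, the joint minimization separates: for a fixed channel-distribution $\vc{g}$, the optimal $\mtx{Q}_c$ can be chosen block-by-block.

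Next, for each block I would apply the Eckart--Young--Mirsky theorem. Writing the singular values of $\mtx{W}_c$ as $\sigma_{c,1} \geq \sigma_{c,2} \geq \cdots \geq \sigma_{c,K^2} \geq 0$, the best rank-$g_c$ Frobenius approximation is the truncated SVD, with residual $\sum_{j>g_c} \sigma_{c,j}^2$. Substituting back, the problem collapses to choosing integers $g_c \in \{0,1,\dots,K^2\}$ that minimize $\sum_{c=1}^C \alpha_c \sum_{j>g_c} \sigma_{c,j}^2$ subject to $\sum_c g_c \leq \gamma$. Equivalently, incrementing $g_c$ by one captures an additional amount of energy $\alpha_c \sigma_{c,g_c+1}^2$, so I would recast the task as selecting at most $\gamma$ unit increments, across all channels, to maximize the total captured energy $\sum_c \alpha_c \sum_{j=1}^{g_c} \sigma_{c,j}^2$.

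The core of the argument is then the optimality of greedy selection. Viewing each pair $(c,j)$ with $j \in [K^2]$ as an item of value $v_{c,j} = \alpha_c \sigma_{c,j}^2$, a feasible allocation corresponds to choosing, for each $c$, a prefix $\{1,\dots,g_c\}$ of items, with at most $\gamma$ items total. The key observation is that within every channel the values are non-increasing, $v_{c,1} \geq v_{c,2} \geq \cdots$, since the $\sigma_{c,j}$ are sorted. Hence the unconstrained top-$\gamma$ selection — simply taking the $\gamma$ largest $v_{c,j}$ overall — is automatically prefix-closed within each channel and therefore feasible, and a standard exchange argument shows no feasible allocation can exceed this value. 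This greedy rule is exactly what Algorithm~\ref{alg:approx1} executes: it computes the $C$ per-channel SVDs, forms the weighted squared singular values, and repeatedly assigns a depthwise filter to the channel offering the largest marginal energy gain until the budget $\gamma$ is exhausted. The running time is dominated by the SVDs and the sorting of the $CK^2$ values, which is polynomial in the layer dimensions.

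The main obstacle I anticipate is the greedy optimality step, and specifically tying the exchange argument to the monotonicity of the marginal gains. It is essential that the per-channel marginal reductions $\alpha_c\sigma_{c,j}^2$ are non-increasing in $j$; this is what guarantees that the globally $\gamma$-largest increments form a valid (prefix-closed) channel distribution rather than an infeasible allocation that skips a singular value. A secondary point requiring care is the equivalence in the first step — verifying that the block-wise rank bound from Lemma~\ref{lemm:gdws_mtx_concat} is not only necessary but sufficient, so that minimizing over rank-constrained blocks genuinely yields an achievable GDWS convolution rather than a mere relaxation.
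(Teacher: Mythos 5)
Your proposal is correct and follows essentially the same route as the paper: reduce to independent per-channel rank-constrained approximations via Lemma~\ref{lemm:gdws_mtx_concat}, apply Eckart--Young block-wise to get the residual $\sum_{j>g_c}\alpha_c\sigma_{c,j}^2$, and allocate the budget $\gamma$ greedily over the weighted squared singular values. The only cosmetic difference is the final combinatorial step: the paper argues greedy optimality by induction on $\gamma$ (showing the optimal $\vc{g}^{(\gamma+1)}$ differs from $\vc{g}^{(\gamma)}$ in at most one coordinate), whereas you use a direct top-$\gamma$ selection with a prefix-closure/exchange argument resting on the monotonicity of $\alpha_c\sigma_{c,j}^2$ in $j$ --- both justify exactly Algorithm~\ref{alg:approx1}.
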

That is:
\begin{equation}\label{eq:thm1}
    \hat{\mtx{W}} = \argmin_{\mtx{Q}:\ G\leq \gamma}  e(\mtx{W},\mtx{Q},\bm{\alpha})
\end{equation}
can be solved for any weight error vector $\bm{\alpha}\in\reals_+^C$ in polynomial time. While Theorem~\ref{thm:approximation_comp} shows that the optimal GDWS approximation under a complexity constraint can be solved efficiently, a similar result can be obtained for the reverse setting shown next.
\begin{theorem}\label{thm:approximation_err} Given a $(C,K,M)$ standard 2D convolution with weight matrix $\mtx{W}$, the $(C,K,\vc{g},M)$ GDWS approximation with weight matrix $\hat{\mtx{W}}$ that minimizes the complexity in \eqref{eq:gdws-comp} subject to $e(\mtx{W},\mtx{Q},\bm{\alpha})\leq \beta$ (for some $\beta \geq 0$), can be constructed in polynomial time via Algorithm~\ref{alg:approx2}. 
\end{theorem}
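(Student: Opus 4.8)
\textbf{Proof proposal for Theorem~\ref{thm:approximation_err}.}
The plan is to reduce the problem to a discrete resource-allocation problem over per-channel ranks and to solve it with a greedy procedure that is the natural dual of the one behind Theorem~\ref{thm:approximation_comp}. First I would invoke Lemma~\ref{lemm:gdws_mtx_concat}: any candidate GDWS weight matrix $\mtx{Q}$ factors as a concatenation $[\mtx{Q}_1|\cdots|\mtx{Q}_C]$ with $\text{rank}(\mtx{Q}_c)\leq g_c$ (recall we assume $g_c\leq K^2$), and conversely any such concatenation is realizable. Since the objective $e$ and the complexity $G=\sum_c g_c$ both decompose across channels, and since $\bm{\alpha}$ enters only as a positive per-channel scalar, for a fixed rank $g_c$ the term $\alpha_c\|\mtx{W}_c-\mtx{Q}_c\|_\text{F}^2$ is minimized by the rank-$g_c$ truncated SVD of $\mtx{W}_c$ (Eckart--Young--Mirsky), giving the minimal contribution $\alpha_c\sum_{j>g_c}\sigma_{c,j}^2$, where $\sigma_{c,1}\geq\sigma_{c,2}\geq\cdots$ are the singular values of $\mtx{W}_c$. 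The square root in \eqref{eq:err-exp} is a monotone transformation, so $e\leq\beta$ is equivalent to $\sum_c\alpha_c\sum_{j>g_c}\sigma_{c,j}^2\leq\beta^2$, and the task becomes: minimize $\sum_c g_c$ subject to this inequality.

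Next I would recast this as a selection problem over the multiset of marginal gains $\{\alpha_c\sigma_{c,j}^2\}_{c\in[C],\,j\in[K^2]}$. Incrementing $g_c$ by one lowers the squared error by exactly $\alpha_c\sigma_{c,g_c+1}^2$; starting from $g_c=0$ the total achievable reduction is $E_0:=\sum_c\alpha_c\|\mtx{W}_c\|_\text{F}^2$, so reaching error $\leq\beta^2$ requires accumulating reduction at least $E_0-\beta^2$. The crucial structural fact is that within each channel the marginal gains are non-increasing in $j$ (the singular values are sorted), so using $g_c$ increments for channel $c$ optimally means taking its $g_c$ largest gains. Algorithm~\ref{alg:approx2} should therefore sort the full multiset of gains in decreasing order and admit them one at a time until the running error drops to $\beta^2$, reading off each $g_c$ as the count of admitted gains belonging to channel $c$ (and reconstructing $\hat{\mtx{W}}_c$ from the corresponding truncated SVD).

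The optimality argument is a prefix/exchange argument, which I expect to be the only delicate step. Let $S_k$ denote the set of the $k$ globally-largest gains produced by the greedy. Two facts suffice. First, $S_k$ is always a valid allocation: if a gain $\alpha_c\sigma_{c,j}^2$ lies in $S_k$ then every larger $\alpha_c\sigma_{c,j'}^2$ with $j'<j$ does too, so $S_k$ respects the prefix structure required by the SVD truncation. Second, for any size $k$, the reduction $R(S_k)$ equals the maximum sum of any $k$ gains, hence $R(S_k)\geq R(S')$ for \emph{every} feasible allocation $S'$ with $|S'|=k$. Now if the greedy first meets the constraint at size $G^*$, no feasible $S'$ of size $G'<G^*$ can satisfy it: otherwise $R(S_{G'})\geq R(S')\geq E_0-\beta^2$ would force the greedy to stop at $G'$, a contradiction. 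Thus $G^*$ is minimal. Polynomial runtime is immediate: we compute $C$ SVDs of $M\times K^2$ matrices, sort at most $CK^2$ scalars, and perform one linear scan. The only routine point to verify is that the weights $\bm{\alpha}$ do not disturb the per-channel optimality of SVD truncation—they do not, being fixed positive multipliers—while they do legitimately reorder the global selection across channels.
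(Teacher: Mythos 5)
Your proposal is correct and follows essentially the same route as the paper's proof: reduce to per-channel truncated SVDs via Eckart--Young (the paper's Lemma~\ref{lemma:approx}), rewrite the squared error as a sum over the multiset $\{\alpha_c\sigma_{i,c}^2\}$, and greedily retain the largest entries until the discarded tail falls below $\beta^2$. The paper phrases the optimality step through a binary indicator-vector reformulation where you use a prefix/exchange argument, but these are the same greedy justification; your version is, if anything, slightly more explicit about why no smaller feasible allocation can exist.
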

That is:
\begin{equation}\label{eq:thm2}
    \hat{\mtx{W}} = \argmin_{\mtx{Q}:\ e(\mtx{W},\mtx{Q},\bm{\alpha})\leq \beta}  \sum_{c=1}^Cg_c
\end{equation}
can be solved for any weight error vector $\bm{\alpha}\in\reals_+^C$ in polynomial time. Proofs of Theorems~\ref{thm:approximation_comp} \& \ref{thm:approximation_err} can be found in the Appendix.

\begin{minipage}[b]{0.47\textwidth}
\begin{algorithm}[H]
\label{alg:approx1}
\DontPrintSemicolon
  
  \KwInput{A $(C,K,M)$ convolution $\mtx{W}$, weight error vector $\bm{\alpha}$, and constraint $\gamma \in \integers_+$.}
  \KwOutput{A $(C,K,\vc{g},M)$ GDWS convolution $\hat{\mtx{W}}$, satisfying $\sum g_c \leq \gamma$.}
  Compute SVDs of $\mtx{W}_c = \sum_{i=1}^{r_c}\sigma_{i,c} \vc{u}_{i,c} \vc{v}_{i,c}^{\text{\normalfont T}}$\;
  Initialize $\vc{g}=\vc{0}$\;
  \While{$\sum g_c < \gamma$}{
    $c' = \argmax_c \alpha_c \sigma^2_{g_c+1,c}$ \tcp*{$g_c < r_c$} 
    $g_{c'} \leftarrow g_{c'} + 1$\;
  }
  Compute $\hat{\mtx{W}}_c$ via truncated SVD of $\mtx{W}_c$ with rank $g_c$: $\hat{\mtx{W}}_c = \sum_{i=1}^{g_c}\sigma_{i,c} \vc{u}_{i,c} \vc{v}_{i,c}^{\text{\normalfont T}}$ \;
  Construct $\hat{\mtx{W}} = [\hat{\mtx{W}}_1|...|\hat{\mtx{W}}_C]$\;
\caption{(\texttt{MEGO}) \underline{M}inimum \underline{E}rror Complexity-constrained \underline{G}DWS \underline{O}ptimal Approximation}
\end{algorithm}
\end{minipage}
\hfill
\begin{minipage}[b]{0.48\textwidth}
\begin{algorithm}[H]
\label{alg:approx2}
\DontPrintSemicolon
  
  \KwInput{A $(C,K,M)$ convolution $\mtx{W}$, weight error vector $\bm{\alpha}$, and constraint $\beta \geq 0$.}
  \KwOutput{A $(C,K,\vc{g},M)$ GDWS convolution $\hat{\mtx{W}}$, satisfying $e\leq \beta$.}
   Compute SVDs of $\mtx{W}_c = \sum_{i=1}^{r_c}\sigma_{i,c} \vc{u}_{i,c} \vc{v}_{i,c}^{\text{\normalfont T}}$\;
  Initialize $g_c=r_c$, $b=0$, $c' = \argmin_c \alpha_c \sigma^2_{r_c,c}$, $h = \alpha_{c'} \sigma^2_{r_{c'},c'}$\;
   \While{$b + h < \beta$}{
   $b\leftarrow b + h$ and $g_{c'} \leftarrow g_{c'} - 1$\;   
    $c' = \argmin_c \alpha_c \sigma^2_{g_c,c}$ \tcp*{$g_c >1$} 
    $h = \alpha_{c'}\sigma^2_{r_{c'},c'}$\;
  }
  Compute $\hat{\mtx{W}}_c$ via truncated SVD of $\mtx{W}_c$ with rank $g_c$: $\hat{\mtx{W}}_c = \sum_{i=1}^{g_c}\sigma_{i,c} \vc{u}_{i,c} \vc{v}_{i,c}^{\text{\normalfont T}}$ \;
  Construct $\hat{\mtx{W}} = [\hat{\mtx{W}}_1|...|\hat{\mtx{W}}_C]$\;
\caption{(\texttt{LEGO}) \underline{L}east Complex \underline{E}rror-constrained \underline{G}DWS \underline{O}ptimal Approximation}
\end{algorithm}
\end{minipage}

\subsection{Constructing GDWS Networks}\label{ssec:gdws-networks}
When confronted with a CNN with $L$ convolutional layers, the question arises: \textit{How to assign resources (in terms of complexity) amongst the $L$ layers such that the robustness of the CNN is minimally compromised?} 
To answer this question, we compute per-layer weight error vectors $\bm{\alpha}_l$, such that the computed error in \eqref{eq:err-exp} weighs how different sub-matrices affect the final output of the CNN. 

Let $f: \reals^D \rightarrow \reals^{N}$ be a pre-trained CNN for an $N$-way classification problem with $L$ convolutional layers parameterizd by weight matrices $\mtx{W}^{(l)} \in \reals^{M_l\times C_lK_l^2}$. The CNN $f$ operates on a $D$-dimensional input vector $\vc{x}$ to produce a vector $\vc{z}=f(\vc{x})$ of soft outputs or logits. Denote by $n_x \in [N]$ the predicted class label associated with $\vc{x}$, and define $\delta_{x,j} = z_j - z_{n_x}$ to be the soft output differences $\forall j\in[N]\setminus\{n_x\}$. 

Inspired by \cite{sakr2017analytical,sakr2018analytical}, we propose a simple yet effective method for computing the per-layer weight error vectors as follows:
\begin{equation}\label{eq:alpha}
    \alpha_{c,l} = \frac{1}{M_lK_l^2}\mathbb{E} \left[\sum_{\substack{j=1\\ j \neq n_x}}^N \frac{||\mtx{D}^{(c,l)}_{x,j}||_{\text{F}}^2}{2\delta_{x,j}^2}\right] \ \ \ \forall l \in [L],\ \forall c\in [C_l]
\end{equation}
where $\mtx{D}^{(c,l)}_{x,j} \in \reals^{M_l \times K_l^2}$ is the derivative of $\delta_{x,j}$ w.r.t. the sub-matrix $\mtx{W}^{(l)}_c$. The expectation is taken over the input vector $\vc{x}$. Equation~\eqref{eq:alpha} can be thought of as the expected noise gain from a particular channel in a particular layer to the network output required to flip its decision. The Appendix provides a detailed rationale underlying \eqref{eq:alpha}.

Computation of \eqref{eq:alpha} can be simplified by obtaining an estimate of the mean over a small batch of inputs sampled from the training set and by  leveraging software frameworks such as PyTorch \cite{paszke2017automatic} that automatically take care of computing $\mtx{D}^{(c,l)}_{x,j}$. Algorithm~\ref{alg:network} summarizes the steps required to approximate any pre-trained CNN with an equivalent CNN utilizing GDWS convolutions. Unless specified otherwise, all the results in this paper are obtained via Algorithm~\ref{alg:network}.
\begin{minipage}{0.98\textwidth}
    \begin{algorithm}[H]
\label{alg:network}
\DontPrintSemicolon
  
  \KwInput{CNN $f$ with convolutional layers $\{\mtx{W}^{(l)}\}$, $\{\bm{\alpha}_l\}$ computed via \eqref{eq:alpha}, and constraint $\beta \geq 0$.}
  \KwOutput{CNN $\hat{f}$ with GDWS convolutions $\{\hat{\mtx{W}}^{(l)}\}$}
   $\hat{f} = f$\tcp*{Initialize $\hat{f}$}
   \For{$l \in \{1, ..., L\}$}
   {
   		$\hat{\mtx{W}}^{(l)} = \texttt{LEGO}(\mtx{W}^{(l)}, \bm{\alpha}_l, \beta)$ \tcp*{solve via Algorithm~\ref{alg:approx2}}
   		Decompose $\hat{\mtx{W}}^{(l)}$ into GDW and PW convolutions via Property~\ref{prop:gdws_mtx} and Lemma~\ref{lemm:gdws_mtx_concat} \;
   		Replace the $l^{\text{th}}$ convolution layer in $\hat{f}$ with GDW and PW convolutions\;
   }

\caption{Constructing GDWS networks}
\end{algorithm}

\end{minipage}

\section{Experiments}
\subsection{Evaluation Setup}

We measure the throughput in FPS by mapping the networks onto an NVIDIA Jetson Xavier via native PyTorch \cite{paszke2017automatic} commands. 
We experiment with VGG-16 \cite{simonyan2014very}, ResNet-18\footnote{For CIFAR-10 and SVHN, we use the standard pre-activation version of ResNets.} \cite{he2016deep}, ResNet-50, and WideResNet-28-4 \cite{zagoruyko2016wide} network architectures, and report both natural accuracy ($\calA_\text{nat}$) and robust accuracy ($\calA_{\text{rob}}$). Following standard procedure, we report $\calA_{\text{rob}}$ against $\ell_\infty$ bounded perturbations generated via PGD \cite{madry2018towards} with standard attack strengths: $\epsilon=8/255$ with PGD-100 for both CIFAR-10 \cite{cifar10} and SVHN \cite{svhn} datasets, and $\epsilon = 4/255$ with PGD-50 for the ImageNet \cite{russakovsky2015imagenet} dataset. Section~\ref{ssec:union} studies union of multiple perturbation models ($\ell_\infty, \ell_2, \ell_1$). In the absence of publicly released pre-trained models, we establish strong baselines using AT \cite{madry2018towards} following the approach of \cite{rice2020overfitting} which utilizes early stopping to avoid robust over-fitting. 
Details on the training/evaluation setup can be found in the Appendix.

\begin{table}[hp]
\caption{Comparison between RobNet \cite{NAS} and GDWS on the CIFAR-10 dataset. GDWS is applied to standard pre-trained models.}
\label{tab:ablation-cifar10}
\centering
\resizebox{0.6\columnwidth}{!}{
  \begin{tabular}{l|ccc|c}
    \toprule
    \bf{Models} & \bf{$\calA_{\text{nat}}\ [\%]$} & \bf{$\calA_{\text{rob}}\ [\%]$}    & \bf{Size $[\text{MB}]$}    & \bf{FPS}\\
    \midrule
    RobNet \cite{NAS}  &82.72& 52.23 &20.8& 5\\
    \midrule
    ResNet-50 & 84.21 & 53.05 & 89.7 &16\\
    + GDWS ($\beta=0.001$) & \bfu{83.72} & \bfu{52.94} & 81.9 &\bfu{37}\\
    \midrule
    WRN-28-4  & 84.00 & 51.80     & 22.3 & 17  \\
    + GDWS ($\beta=1\times 10^{-5}$) & \bfu{83.27} & \bfu{51.70} & 18.9 & \bfu{65}\\
    \midrule
    ResNet-18 & 82.41  & 51.55 & 42.6 & 28    \\
    + GDWS ($\beta=0.005$) & \bfu{81.17} & \bfu{50.98} & 29.1 & \bfu{104}\\
    \midrule
    VGG-16     & 77.49 & 48.92 &   56.2  &   36  \\
   + GDWS ($\beta=0.25$) & \bfu{77.17} & \bfu{49.56} & 28.7 &\bfu{129}\\

    \bottomrule
  \end{tabular}}
  \end{table}
\subsection{Results}\label{ssec:results}
\textbf{Ablation Study:}
We first show the effectiveness of GDWS on the CIFAR-10 datasets using four network architectures. Table~\ref{tab:ablation-cifar10} summarizes $\calA_\text{nat}$ and $\calA_\text{rob}$ as well as FPS and model size. It is clear that GDWS networks preserve robustness as both $\calA_\text{nat}$ and $\calA_\text{rob}$ are always within $\sim$1$\%$ of their respective baselines. The striking conclusion is that in spite of GDWS offering modest reductions in model size, it \textit{drastically} improves the FPS of the base network across diverse architectures. For instance, a ResNet-18 utilizing GDWS convolutions is able to run at 104 FPS compared to the baseline's 28 FPS (>250\% improvement) without additional training and without compromising on robust accuracy. In the Appendix, we explore the benefits of applying GDWS using both Algorithms~\ref{alg:approx1}~\&~\ref{alg:approx2}, provide more detailed results on CIFAR-10 and show that similar gains are observed with SVHN dataset.

\textbf{GDWS vs. RobNet:} In Table~\ref{tab:ablation-cifar10}, we also compare GDWS networks (obtained from standard networks) with a publicly available pre-trained RobNet model, the robust network architecture designed via the NAS framework in \cite{NAS}. Note that RobNet utilizes DWS convolutions which precludes the use of GDWS. However, despite the efficiency of DWS convolutions in RobNet, its irregular cell structure leads to extremely poor mapping on the Jetson as seen by its low 5 FPS. For reference, a standard WideResNet-28-4 (WRN-28-4) runs at 17 FPS with similar robustness and model size. Applying GDWS to the WideResNet-28-4 further increases the throughput to 65 FPS which is a 1200\% improvement compared to RobNet while maintaining robustness. This further supports our assertion that model compression alone does not lead to enhanced performance on real hardware.

\begin{wraptable}{R}{0.6\columnwidth}
  \caption{Comparison between GDWS and lightweight networks on the CIFAR-10 dataset. The GDWS numbers are from Table~\ref{tab:ablation-cifar10}.}
  \label{tab:comp-lightnets}
  \centering
  \resizebox{0.6\columnwidth}{!}{%
  \begin{tabular}{l|ccc|c}
    \toprule
    \bf{Models} & \bf{$\calA_{\text{nat}}\ [\%]$} & \bf{$\calA_{\text{rob}}\ [\%]$}    & \bf{Size $[\text{MB}]$} & \bf{FPS}\\
    \midrule
    ResNet-18 + GDWS & \bfu{81.17} & \bfu{50.98} & 29.1 & \bfu{104}    \\
    VGG-16 + GDWS  &77.17 & 49.56 & 28.7 &129 \\
    \midrule
    MobileNetV1 & 79.92& 49.08& 12.3& 125\\
    MobileNetV2 & 79.59& 48.55&8.5  & 70\\
    ResNet-18 (DWS) & 80.12  & 48.52 & 5.5 & 120\\
    ResNet-20 &74.82 & 47.00& 6.4& 125 \\
    \bottomrule
  \end{tabular}}
\end{wraptable}

\textbf{GDWS vs. Lightweight Networks}:
A natural question that might arise from this work: why not train lightweight networks utilizing DWS convolutions from scratch instead of approximating pre-traind complex networks with GDWS? In Table~\ref{tab:comp-lightnets}, we compare the performance of GDWS networks (obtained from Table~\ref{tab:ablation-cifar10}) vs. standard lightweight networks: MobileNetV1 \cite{howard2017mobilenets}, MobileNetV2 \cite{sandler2018mobilenetv2}, and ResNet-20 \cite{he2016deep}, as well as a DWS-version of the standard ResNet-18 trained from scratch on the CIFAR-10 dataset. We find that applying GDWS to a pre-trained complex network such as ResNet-18 achieves better $\calA_{\text{nat}}$ and $\calA_{\text{rob}}$ than all lightweight networks, while achieving DWS-like FPS and requiring no extra training despite offering modest reductions in model size. The only benefit of using lightweight networks is the much smaller model size compared to GDWS networks.

\begin{wraptable}{R}{0.6\columnwidth}
    \caption{Comparison between ADMM \cite{admm} and GDWS using VGG-16 and ResNet-18 on CIFAR-10.}
      \label{tab:comp-admm}
      \centering
        \resizebox{0.6\columnwidth}{!}{%
  \begin{tabular}{l|ccc|c}
    \toprule
    \bf{Models} & \bf{$\calA_{\text{nat}}\ [\%]$} & \bf{$\calA_{\text{rob}}\ [\%]$}    & \bf{Size $[\text{MB}]$}  & \bf{FPS}\\
    \midrule
    VGG-16 (AT from \cite{admm}) &77.45 & 45.78 & 56.2 &36 \\
    + GDWS ($\beta=0.5$) & \bfu{76.40}& \bfu{46.28} & 38.8 & \bfu{119}\\
    
    \midrule
    VGG-16 ($p=25\%$) & 77.88& 43.80& 31.6&  26 \\
    VGG-16 ($p=50\%$) & 75.33& 42.93& 14.0&  113\\
    VGG-16 ($p=75\%$) & 70.39& 41.07& 3.5&  174\\
    \bottomrule
    \toprule
     ResNet-18 (AT from \cite{admm}) &80.65 & 47.05 & 42.6 &  28 \\
    + GDWS ($\beta=0.75)$ &\bfu{79.13}& \bfu{46.15}& 30.4&  \bfu{105}\\
    \midrule
    ResNet-18 ($p=25\%$) &  81.61& 42.67& 32.1&  31 \\
    ResNet-18 ($p=50\%$) & 79.42& 42.23& 21.7&  60\\
    ResNet-18 ($p=75\%$) & 74.62& 43.23& 11.2&  74\\
    \bottomrule
  \end{tabular}}
\end{wraptable}
\textbf{GDWS vs. Structured Pruning:}
In Table~\ref{tab:comp-admm}, we compare GDWS with the robust structured pruning method ADMM \cite{admm} on CIFAR-10, using two networks: VGG-16 and ResNet-18. Due to the lack of publicly available pre-trained models, we use their released code to reproduce both the AT baselines, and the corresponding pruned models at different pruning ratios. The nature of structured pruning allows ADMM pruned networks ($p\geq 50\%$) to achieve both high compression ratios and significant improvement in FPS over their un-pruned baselines but at the expense of robustness and accuracy. For instance, a ResNet-18 with 75\% of its channels pruned results in a massive 7$\%$ (4$\%$) drop in $\calA_\text{nat}$ ($\calA_\text{rob}$) compared to the baseline even though it achieves a 160$\%$ improvement in FPS. In contrast, a post-training application of GDWS to the \textit{same} ResNet-18 baseline results in a massive 275$\%$ improvement in FPS while preserving both $\calA_\text{nat}$ and $\calA_\text{rob}$ within 1$\%$ of their baseline values. Thus, despite achieving modest compression ratios compared to ADMM, GDWS achieves comparable improvements in FPS without compromising robustness.

\begin{wraptable}{R}{0.6\columnwidth}
    \caption{Comparison between HYDRA \cite{sehwag2020hydra} and GDWS using VGG-16 and WRN-28-4 on CIFAR-10.}
      \label{tab:comp-hydra}
      \centering
    \resizebox{0.6\columnwidth}{!}{%
  \begin{tabular}{l|ccc|c}
    \toprule
    \bf{Models} & \bf{$\calA_{\text{nat}}\ [\%]$} & \bf{$\calA_{\text{rob}}\ [\%]$}    & \bf{Size $[\text{MB}]$} &\bf{FPS}\\
    \midrule
    VGG-16 (AT from \cite{sehwag2020hydra})&82.72 & 51.93 & 58.4&36 \\
    + GDWS ($\beta=0.5$) &\bfu{82.53} & \bfu{50.96} & 50.6 & \bfu{102} \\
    \midrule
    VGG-16 ($p=90\%$) & 80.54& 49.44& 5.9&  36 \\
    + GDWS ($\beta=0.1$) &80.47& 49.52 & 31.5 & 93 \\
    \midrule
    VGG-16 ($p=95\%$) & 78.91& 48.74& 3.0&  36 \\
    + GDWS ($\beta=0.1$) &78.71& 48.53 & 18.3 & 106 \\
    \midrule
    VGG-16 ($p=99\%$) & 73.16& 41.74& 0.6&  41 \\
    + GDWS ($\beta=0.02$) &\bfu{72.75}& \bfu{41.56} & \bfu{2.9} & \bfu{136} \\
    \bottomrule
    \toprule
     WRN-28-4 (AT from \cite{sehwag2020hydra}) &85.35 & 57.23 & 22.3 &  17 \\
    + GDWS ($\beta=1$) &\bfu{84.17} &\bfu{55.87}& 20.5 & \bfu{68} \\
    \midrule
    WRN-28-4 ($p=90\%$) & 83.69& 55.20& 2.3&  17 \\
    + GDWS ($\beta=0.125$) &83.38& 54.79 &11.9 & 59 \\
    \midrule
    WRN-28-4 ($p=95\%$) & 82.68& 54.18& 1.1&  17 \\
    + GDWS ($\beta=0.005$) &82.59& 54.22 & 7.2 & 60 \\
    \midrule
    WRN-28-4 ($p=99\%$) & 75.62& 47.21& 0.2&  28 \\
    + GDWS ($\beta=0.0025$) &\bfu{75.36}& \bfu{47.04} & \bfu{1.2} & \bfu{68} \\
    \bottomrule
  \end{tabular}}
\end{wraptable}

\textbf{GDWS vs. Unstructured Pruning:} 
We compare GDWS with HYDRA \cite{sehwag2020hydra} which is an unstructured robust pruning method, on both CIFAR-10 and ImageNet datasets. We use the publicly released HYDRA models as well as their AT baselines, and apply GDWS to both the un-pruned and pruned models. Table~\ref{tab:comp-hydra} summarizes the robustness and FPS of HYDRA and GDWS networks on CIFAR-10. HYDRA pruned models have arbitrarily sparse weight matrices that cannot be leveraged by off-the-shelf hardware platforms immediately. Instead, we rely on the extremely high sparsity (99$\%$) of these matrices to emulate channel pruning whereby channels are discarded only if all filter weights are zero. This explains why, despite their high compression ratios, HYDRA models do not achieve significant improvements in FPS compared to their baselines.

For instance, a 99$\%$ HYDRA pruned WideResNet model achieves a massive $\sim$100$\times$ compression ratio and improves the FPS from 17 to 28, but suffers from a large $\sim$10$\%$ drop in both $\calA_\text{nat}$ and $\calA_\text{rob}$. In contrast, GDWS applied to the same un-pruned baseline preserves robustness and achieves significantly better throughput of 68 FPS, even though the model size reduction is negligible. Interestingly, we find that applying GDWS directly to HYDRA pruned models results in networks with high compression ratios with no robustness degradation and massive improvements in FPS compared to the pruned baseline. For example, applying GDWS to the same 99\% HYDRA pruned WideResNet achieves a $\sim$20$\times$ compression ratio and improves the throughput from 28 FPS to 68 FPS while preserving $\calA_\text{nat}$ and $\calA_\text{rob}$ of the pruned baseline. This synergy between HYDRA and GDWS is due to the fact that highly sparse convolution weight matrices are more likely to have low-rank and sparse sub-matrices. This implies that, using Lemma~\ref{lemm:gdws_mtx_concat}, sparse convolutions can be transformed to sparse GDWS versions with negligible approximation error. We explore this synergy in detail in the Appendix. Table~\ref{tab:comp-hydra-imagenet} shows that GDWS benefits also show up in ImageNet using ResNet-50.  

\begin{table}[hp]
  \caption{Comparison between HYDRA \cite{sehwag2020hydra} and GDWS using ResNet-50 on ImageNet.}
  \label{tab:comp-hydra-imagenet}
  \centering
  \resizebox{0.75\columnwidth}{!}{%
  \begin{tabular}{l|ccc|c}
    \toprule
    \bf{Models} & \bf{top-1 / 5 $\calA_{\text{nat}}\ [\%]$} & \bf{top-1 / 5 $\calA_{\text{rob}}\ [\%]$}    & \bf{Size $[\text{MB}]$} & \bf{FPS}\\
    \midrule
    ResNet-50 (AT from \cite{sehwag2020hydra}) &60.25 / 82.39 & 31.94 / 61.13 & 97.5 &15 \\
    + GDWS ($\beta=50$) &\bfu{58.04} / \bfu{80.56} & \bfu{30.22} / \bfu{58.48} & \bfu{86.2} &\bfu{19} \\
    \midrule
    ResNet-50 ($p=95\%$) & 44.60 / 70.12& 19.53 / 44.28& 5.1& 15 \\
    + GDWS ($\beta=0.5$) &43.91 / 69.46& 19.27 / 43.58& 12.6 &  19\\
    \midrule
    ResNet-50 ($p=99\%$) & 27.68 / 52.55& 11.32 / 28.83& 1.2& 17 \\
    + GDWS ($\beta=0.5$) & \bfu{26.27} / \bfu{50.90}& \bfu{10.92} / \bfu{27.55} & \bfu{2.9} & \bfu{25} \\
    \bottomrule
  \end{tabular}}
  \end{table}

\subsection{Defending against Union of Perturbation Models}\label{ssec:union}
Recent work has shown that adversarial training with a single perturbation model leads to classifiers vulnerable to the union of ($\ell_\infty, \ell_2, \ell_1$)-bounded perturbations \cite{schott2018towards,TB19,maini2020adversarial}. The method of multi steepest descent (MSD) \cite{maini2020adversarial} achieves state-of-the-art union robust accuracy ($\calA_\text{rob}^{\text{U}}$) against the union of ($\ell_\infty, \ell_2, \ell_1$)-bounded perturbations. We demonstrate the versatility of GDWS by applying it to a publicly available \cite{maini2020adversarial} robust pre(MSD)-trained ResNet-18 model on CIFAR-10. Following the setup in \cite{maini2020adversarial}, all attacks were run on a subset of the first 1000
test images with 10 random restarts with the following attack configurations: $\epsilon_\infty = 0.03$ with PGD-100 , $\epsilon_2 = 0.5$ with PGD-500, and $\epsilon_1 = 12$ with PGD-100. Table~\ref{tab:msd-results} shows that applying GDWS with $\beta = 0.01$ to the pre-trained ResNet-18 incurs a negligible ($<\sim$1$\%$) drop in $\calA_\text{nat}$ and $\calA_\text{rob}^{\text{U}}$ while improving the throughput from 28 FPS to 101 FPS ($>250\%$ improvement).

\begin{table}[htbp]
  \caption{Benefits of GDWS when evaluated against union of perturbation models on CIFAR-10. $\calA_\text{rob}^{\text{U}}$ is the fraction of test images that are simultaneously resistant to all perturbation models.}
  \label{tab:msd-results}
  \centering
    \resizebox{0.75\columnwidth}{!}{%
  \begin{tabular}{l|ccccc|c}
    \toprule
    \bf{Models} & \bf{$\calA_{\text{nat}}\ [\%]$} & \bf{$\calA_{\text{rob}}^{\infty}\ [\%]$}    & \bf{$\calA_{\text{rob}}^{1}\ [\%]$} & \bf{$\calA_{\text{rob}}^{2}\ [\%]$} & \bf{$\calA_{\text{rob}}^{\text{U}}\ [\%]$} & \bf{FPS}\\
    \midrule
    ResNet-18 (AT from \cite{maini2020adversarial}) &81.74 & 47.50 & 53.60 & 66.10& 46.10 & 28\\
    + GDWS ($\beta=0.0025$) & 81.67& 47.60 & 53.60 & 66.00 & 46.30 & 87\\
    + GDWS ($\beta=0.005)$ &81.43& 47.30& 52.60& 65.60& 45.70 & 92\\
    + GDWS ($\beta=0.01)$ &\bfu{81.10}& \bfu{47.20}& \bfu{52.20}& \bfu{65.00}& \bfu{45.20} & \bfu{101}\\
    \bottomrule
  \end{tabular}}
\end{table}

\section{Discussion}\label{sec:discussion}
We have established that the proposed GDWS convolutions are universal and efficient approximations of standard 2D convolutions that are able to accelerate any pre-trained CNN utilizing standard 2D convolution while preserving its accuracy and robustness. This facilitates the deployment of CNNs in safety critical edge applications where real-time decision making is crucial and robustness cannot be compromised. One limitation of this work is that GDWS alone does not achieve high compression ratios compared to pruning. Combining unstructured pruning with GDWS alleviates this problem to some extent. Furthermore, GDWS cannot be applied to CNNs utilizing DWS convolutions, such as RobNet for instance. An interesting question is to explore the possibility of training GDWS-structured networks from scratch. Another possible direction is fine-tuning post GDWS approximation to recover robustness, which we explore in the Appendix.

In summary, a GDWS approximated network inherits all the properties, e.g., accuracy, robustness, compression and others, of the baseline CNN while significantly enhancing its throughput (FPS) on real hardware. Therefore, the societal impact of GDWS approximated networks are also inherited from those of the baseline CNNs.

\begin{ack}
This work was supported by the Center for Brain-Inspired Computing (C-BRIC) and the Artificial Intelligence Hardware (AIHW) program funded by the Semiconductor Research Corporation (SRC) and the Defense Advanced Research Projects Agency (DARPA).
\end{ack}

\bibliographystyle{plain}
\bibliography{ref}
\medskip
\vfill
\pagebreak
\appendix
\section{Experimental Setup Details}\label{app:setup}
\subsection{Evaluation Setup}
In this section we provide details on how we measure FPS on the Jetson, as well as explain how we map GDWS convolutions efficiently. We use a single off-the-shelf NVIDIA Jetson Xavier NX developer kit for all our experiments. The Jetson Xavier is equipped with a 384-core NVIDIA Volta GPU, a 6-core NVIDIA Carmel ARM 64-bit CPU, and 8GB 128-bit LPDDR4x memory. We install the latest PyTorch packages onto the Jetson, as we will use their native neural network (NN) modules to implement both standard and GDWS convolutions. Specifically, we used PyTorch  v1.8.0 with Python v3.6.9 and CUDA v10.2.

\textbf{Measuring FPS:} The Python pseudo-code in \ref{lst:fps} explains how the FPS for any neural network model was measured on the Jetson. The main idea is to run successive inferences (batch size of 1) and measure the total elapsed time reliably, and calculate the FPS as the total number of inferences divided by the total elapsed time. To ensure consistency, we use 10000 inferences to measure FPS, after the GPU has been warmed up with 5000 inferences as well. Note that the measured FPS reflects the raw capabilities of the GPU, ignoring any I/O to and from the GPU.

\begin{lstlisting}[language=Python, caption=Python pseudo-code for measuring FPS on the Jetson using PyTorch modules., label={lst:fps}]

#get the appropriate NN architecture, e.g., ResNet-18
model =  get_architecture() 

#load the pre-trained model parameters from memory
model.load_state_dict(state_dict)

#transfer the model onto the GPU
model = model.cuda() 

#set the model in evaluation mode
model.eval() 

#sample a single test input and load it into GPU memory
x_test = get_input() #use batch size of 1
x_test.cuda()

#ensure no gradient overheads are introduced
with torch.no_grad(): 
    ## run successive inferences to warm-up the GPU
    for t in range(5000):
        y = model(x_test)
    
    ## setup synchronized timers in PyTorch
    start = torch.cuda.Event(enable_timing=True)
    end = torch.cuda.Event(enable_timing=True)
    start.record()
    
    ## now that the GPU is warmed-up, we run successive inferences and measure the total latency
    for t in range(num_inferences):
        y = model(x_test)
    
    ## measure the elapsed time delay in seconds
    end.record()
    torch.cuda.synchronize()
    delay = start.elapsed_time(end)/1000
    
## get the frames-per-second number
FPS = num_inferences/delay
\end{lstlisting}

\textbf{Mapping GDWS Convolutions:} Mapping GDWS convolutions requires mapping both the GDW and the PW convolutions efficiently onto the Jetson. PW layers are standard 2D convolutions with $1\times 1$ kernels, thus implementing PW convolutions using the PyTorch convolution module is straight forward. The challenge arises when mapping GDW convolutions, as it is a new convolutional structure that is not directly supported yet in PyTorch. To that end, we use simple tensor manipulations and leverage the existing support for standard DW convolution in PyTorch to implement GDW convolutions. 

Note that a $(C,K,\vc{g})$ GDW convolution operating on input tensor $\tnsr{X}\in \reals^{C\times H\times W}$convolves the $c^\text{th}$ input channel with $g_c\in\integers_+$ depthwise $K\times K$ filters to produce a total of $G$ intermediate output channels. A DW convolution operating on the same input tensor $\tnsr{X}$ is a special case of GDW where $g_c = 1 \ \forall c$. It is not difficult to see that a $(C,K,\vc{g})$ GDW convolution operating on $\tnsr{X}$ is equivalent to a DW convolution operating on the modified tensor $\tnsr{X}'\in \reals^{G\times H\times W}$ with $G = \sum g_c$ channels, where the tensor $\tnsr{X}'$ is obtained by duplicating the $c^\text{th}$ channel from $\tnsr{X}$ $g_c$ times. This tensor manipulation is implemented via simple tensor indexing in PyTorch. Therefore, we can efficiently map GDWS convolutions onto the Jetson without requiring any custom libraries. 

\subsection{Training Hyperparameters}
In the absence of any publicly available pre-trained models, we obtain strong baselines using AT \cite{madry2018towards} following the approach of \cite{rice2020overfitting} which utilizes early stopping to avoid robust over-fitting. We use the same hyperparameters, detailed below for our CIFAR-10 and SVHN baselines. A single workstation with two NVIDIA Tesla P100 GPUs is used for running all the training experiments.

\textbf{CIFAR-10:} For the CIFAR-10 experiments presented in Table~\ref{tab:ablation-cifar10-appndx} (Table~1 in main manuscript), we use PGD-7 adversarial training with $\epsilon = 8/255$ and step size $2/255$ for a maximum of 200 epochs and 128 mini-batch size. We employ a step-wise learning rate decay set initially at 0.1 and divided by 10 at epochs 100 and 150. We use a weight decay of $5\times 10^{-4}$, except for the lightweight networks  which were trained with a smaller weight decay of $2\times 10^{-4}$.

\textbf{SVHN:} For the SVHN experiments presented in Table~\ref{tab:svhn}, we use PGD-7 adversarial training with $\epsilon = 8/255$ and step size $2/255$ for a maximum of 200 epochs and 128 mini-batch size. We employ a step-wise learning rate decay set initially at 0.01 and divided by 10 at epochs 100 and 150. We use a weight decay of $5\times 10^{-4}$.

\subsection{Computing the Weight Error Vectors}
Constructing GDWS networks via Algorithm~\ref{alg:network} requires computing the per-layer weight error vectors $\{\bm{\alpha}_l\}$ as described in \eqref{eq:alpha} in Section~\ref{ssec:gdws-networks} of the main manuscript. Throughout all of our experiments, we compute the $\{\bm{\alpha}_l\}$ via an estimate of the mean over a small batch of \textit{adversarial} inputs sampled from the training set. Specifically, throughout all of experiments, we use 1000 input samples generated via PGD-7 with $\epsilon = 8/255$, except for ImageNet were 5000 adversarial input samples were used that were generated via PGD-4 with $\epsilon = 4/255$. 
\vfill
\pagebreak
\section{Additional Experiments and Comparisons}\label{app:experiments}
\subsection{Extended Ablation Study}
\textbf{Benefits of Non-uniform GDWS Networks:} 
We expand on Section~\ref{ssec:results} by comparing the benefits of using Algorithm~\ref{alg:network} vs. Algorithm~\ref{alg:approx1} to design GDWS networks. We denote networks obtained from Algorithm~\ref{alg:network} as GDWS-N (non-uniform reduction in complexity) and Algorithm~\ref{alg:approx1} as GDWS-U (uniform reduction in complexity). Specifically, for GDWS-U, we use Algorithm~\ref{alg:approx1}, with unweighted error ($\alpha_{c,l}=1$) to construct the error-optimal GDWS approximations of each layer, such that we reduce the number of MACs of each layer by the same fixed percentage.

We use VGG-16 on CIFAR-10 as our network and dataset of choice. We obtain different GDWS networks by varying the choice of $\beta$ in GDWS-N and the reduction percentage in GDWS-U. Figure~\ref{fig:ng_comp_stats} shows the per-layer reduction in MACs for both methods. As expected, GDWS-U produces uniform reductions across all layers, whereas GDWS-N is not restricted in that regard. In Figs~\ref{fig:ng_comp_nat} \& \ref{fig:ng_comp_rob} we compare both methods by plotting the natural and robust accuracies vs. FPS, respectively. The per-layer granularity inherit to GDWS-N allows it to outperform GDWS-U, as it consistently achieves higher natural and robust accuracies than at iso-FPS. 
\begin{figure}[bthp]
  \centering
    \subfloat[]{\includegraphics[height=4cm]{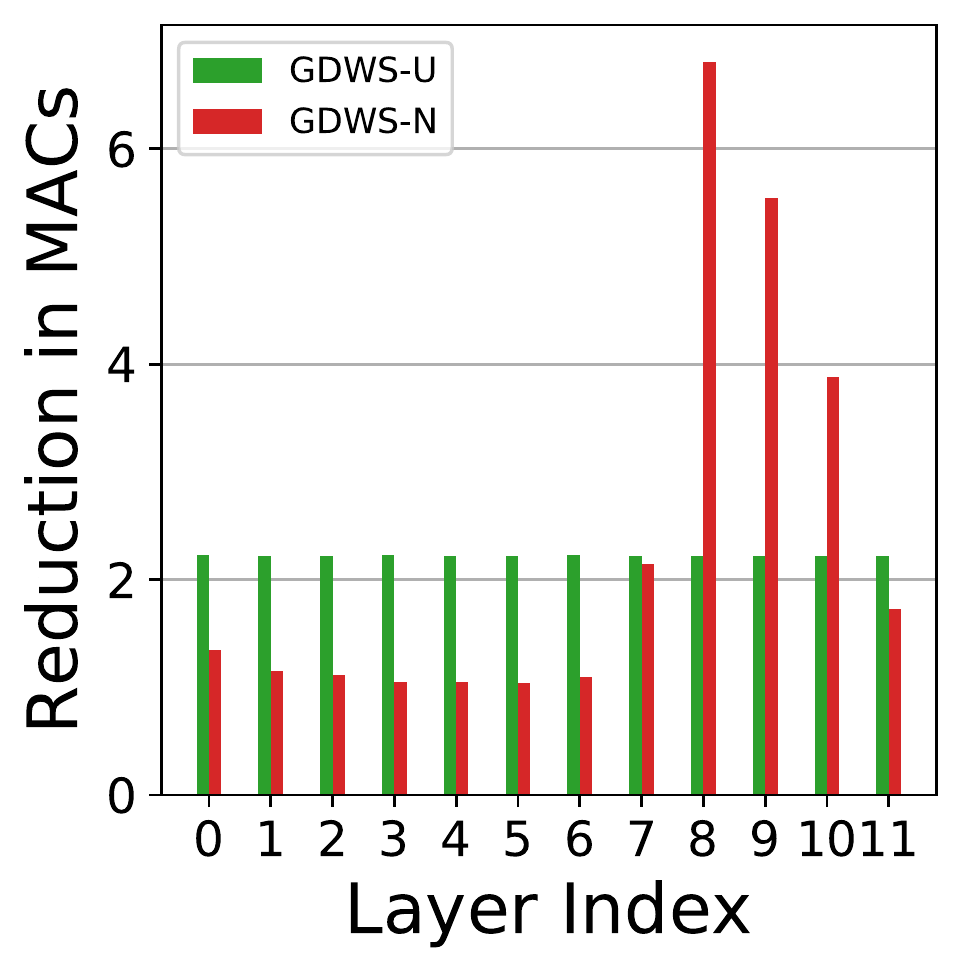}\label{fig:ng_comp_stats}}%
    \qquad%
    \subfloat[]{\includegraphics[height=4cm]{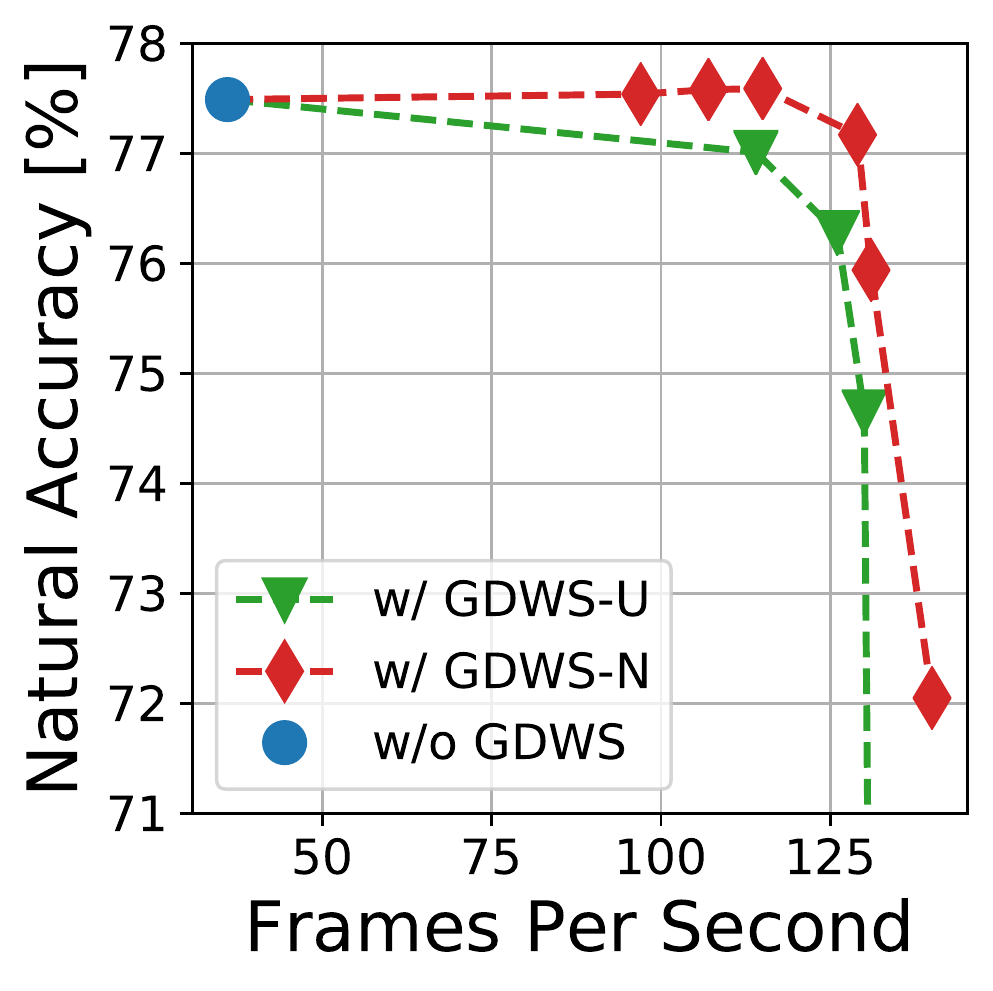}\label{fig:ng_comp_nat}}%
    \qquad%
    \subfloat[]{\includegraphics[height=4cm]{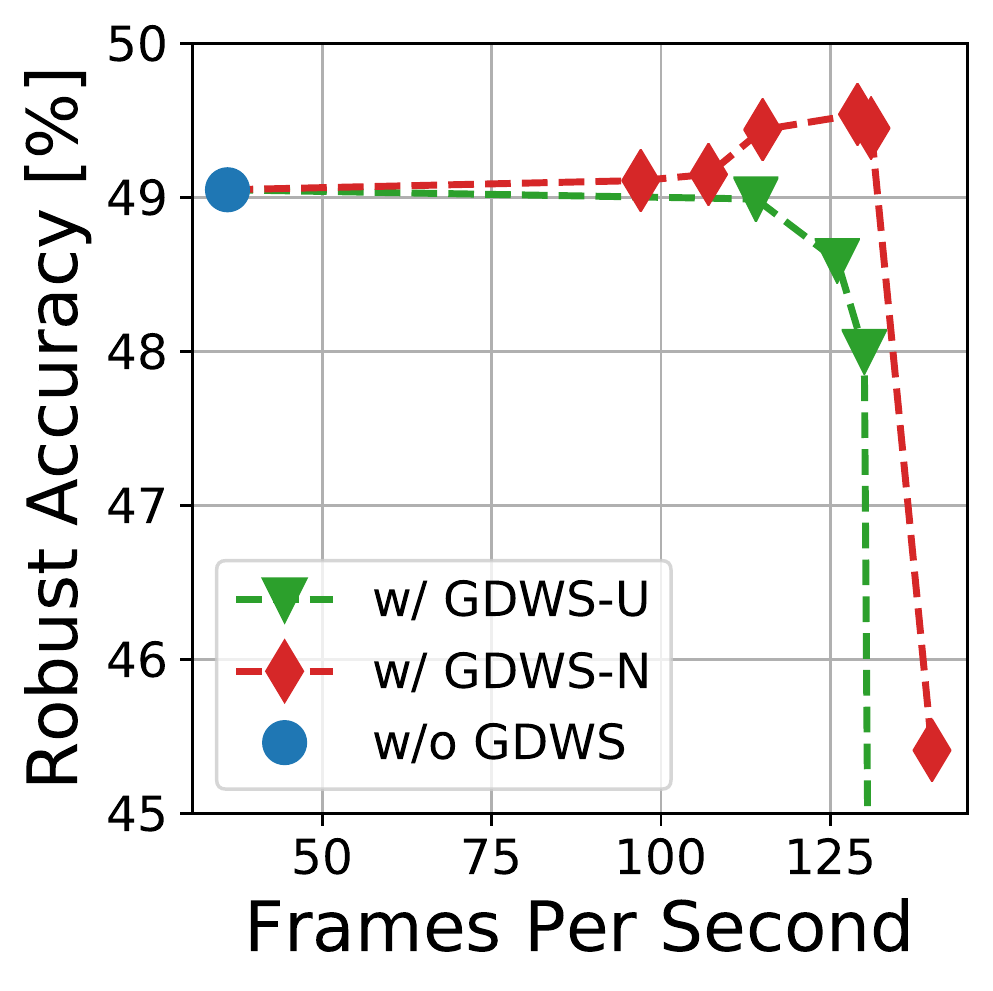}\label{fig:ng_comp_rob}}%
  \caption{Comparison of two methods for constructing GDWS networks using VGG-16 on CIFAR-10 by showing: (a) the per-layer reduction in complexity, (b) natural accuracy vs. FPS, and (c) robust accuracy vs. FPS.}
  \label{fig:ng_comp}
\end{figure}

\textbf{Impact of Fine-tuning:}  In this section, we showcase that fine-tuning via adversarial training for 10 epochs after the application of GDWS can significantly boost the efficacy of GDWS. In Table~\ref{tab:ablation-fine-tune}, we use the same VGG-16 baseline on CIFAR-10 from Table~\ref{tab:ablation-cifar10} in Section~\ref{ssec:results} and apply GDWS with higher approximation errors $\beta$. This results in GDWS networks with smaller model sizes and higher FPS, but with a significant degradation in robust and natural accuracies. As expected, fine-tuning boosts both robust and natural accuracies (up to $\sim1\%$ of the pre-trained baseline).

\begin{table}[hbpt]
\caption{Fine-tuning after GDWS using VGG-16 on CIFAR-10.}
\label{tab:ablation-fine-tune}
\centering
  \begin{tabular}{l|ccc|c}
    \toprule
    \bf{Models} & \bf{$\calA_{\text{nat}}\ [\%]$} & \bf{$\calA_{\text{rob}}\ [\%]$}    & \bf{Size $[\text{MB}]$}    & \bf{FPS}\\
    \midrule
    VGG-16     & 77.49 & 48.92 &   56.2  &   36  \\
    + GDWS ($\beta=0.25$) & 77.17 & 49.56 & 28.7 &129\\
    \midrule
    + GDWS ($\beta=2$) & 72.05  & 45.35  & 19.1 &140\\
    \ \ \ + fine-tune & \bfu{77.15} & \bfu{47.87} & 19.1 &140\\
    \midrule
   + GDWS ($\beta=5$) & 63.21  & 37.78  & 16.3 &143\\
   \ \ \ + fine-tune & \bfu{76.76} & \bfu{47.92} & 16.3 &143\\
    \bottomrule
  \end{tabular}
\end{table}

\textbf{Different Types of Attacks:} In this section, we conduct an extra set of attacks, highlighted in Table~\ref{tab:ablation-attacks} below, on the VGG-16 network on CIFAR-10 (same baseline as before). We use the Foolbox \cite{rauber2017foolbox} (\url{https://github.com/bethgelab/foolbox}) implementation of all these attacks to ensure proper implementation. All the attacks are using $\ell_\infty$-bounded perturbations with $\epsilon=8/255$, similar to our PGD results in the main manuscript. As expected, GDWS preserves the robustness of the pre-trained baseline, across different attack methods.

\begin{table}[hbpt]
\caption{Robustness across different types of attacks using VGG-16 on CIFAR-10.}
\label{tab:ablation-attacks}
\centering
  \begin{tabular}{l|ccc|c}
    \toprule
    \bf{Models} & \bf{$\calA_{\text{rob}}\ [\%]$ (FGSM)}    & \bf{$\calA_{\text{rob}}\ [\%]$ (BIM)}    & \bf{$\calA_{\text{rob}}\ [\%]$ (DeepFool)} & \bf{FPS}\\
    \midrule
    VGG-16     & 52.53 & 49.61 & 47.89 &   36  \\
  
   + GDWS ($\beta=0.25 $) &  53.19 & 50.08 & 47.28 & 129\\
   + GDWS ($\beta=0.5 $) & 52.69 & 49.87 & 46.32 & 131\\

    \bottomrule
  \end{tabular}
\end{table}

\textbf{Additional Results on CIFAR-10:}
This section expands on the CIFAR-10 results presented in Table~\ref{tab:ablation-cifar10} in Section~\ref{ssec:results} by adding additional GDWS data points with different values of $\beta$. Table~\ref{tab:ablation-cifar10-appndx} shows that GDWS networks preserve $\calA_\text{nat}$ and $\calA_\text{rob}$ as both are within $\sim$1$\%$ of their respective baselines. This further supports our claims in Section~\ref{ssec:results} that GDWS networks drastically improve the FPS while preserving robustness.

\begin{table}[hbpt]
\caption{Benefits of applying GDWS to standard pre-trained models on the CIFAR-10 dataset.}
\label{tab:ablation-cifar10-appndx}
\centering
  \begin{tabular}{l|ccc|c}
    \toprule
    \bf{Models} & \bf{$\calA_{\text{nat}}\ [\%]$} & \bf{$\calA_{\text{rob}}\ [\%]$}    & \bf{Size $[\text{MB}]$}    & \bf{FPS}\\
    \midrule
    ResNet-50 & 84.21 & 53.05 & 89.7 &16\\
    + GDWS ($\beta=0.001$) & 83.72 & 52.94 & 81.9 &37\\
    + GDWS ($\beta=0.005$) & 81.18 & 51.25 & 75.9 &39\\
    \midrule
    WRN-28-4  & 84.00 & 51.80     & 22.3 & 17  \\
    + GDWS ($\beta=5\times 10^{-6}$) & 83.64 & 51.62 & 19.9  &64\\
    + GDWS ($\beta=1\times 10^{-5}$) & 83.27 & 51.70 & 18.9 & 65\\
    \midrule
    ResNet-18 & 82.41  & 51.55 & 42.6 & 28    \\
    + GDWS ($\beta=0.001$) & 82.17 & 51.30 & 33.5 &89\\
    + GDWS ($\beta=0.005$) & 81.17 &50.98 & 29.1 & 104\\
    \midrule
    VGG-16     & 77.49 & 48.92 &   56.2  &   36  \\
    + GDWS ($\beta=0.1$) & 77.59 & 49.36 & 33.3 &115\\
   + GDWS ($\beta=0.25$) & 77.17 & 49.56 & 28.7 &129\\
    \bottomrule
  \end{tabular}
\end{table}

\textbf{New Results on SVHN:} Table~\ref{tab:svhn} shows that applying GDWS to pre-trained networks on SVHN maintains the robustness while offering significant improvements in FPS, which mirrors the same observations made on CIFAR-10.
\begin{table}[hbpt]
  \caption{Benefits of applying GDWS to standard pre-trained models on the SVHN dataset.}
  \label{tab:svhn}
  \centering
  \begin{tabular}{l|ccc|c}
    \toprule
    \bf{Models} & \bf{$\calA_{\text{nat}}\ [\%]$} & \bf{$\calA_{\text{rob}}\ [\%]$}    & \bf{Size $[\text{MB}]$} & \bf{FPS}\\
    \midrule
    WRN-28-4  & 90.71 & 52.27     & 22.3  &17  \\

    + GDWS ($\beta=0.0001$) & 90.67 & 51.89  &  22.3 & 56\\
    + GDWS ($\beta=0.0005$) & 90.60 & 51.11  & 22.1 & 64\\
    \midrule
    ResNet-18 & 88.63  & 55.57 & 42.6  & 28    \\
    + GDWS ($\beta=5\times 10^{-5}$) & 87.87 & 55.88 & 39.9  &80\\
    + GDWS ($\beta=7.5\times 10^{-5}$) & 87.37 & 55.66  & 39.3 & 89\\
    \midrule
    VGG-16     & 90.72 & 51.51 &   56.2  &   36  \\
    + GDWS ($\beta=0.1$) & 90.62 &  51.84  & 53.6 & 93\\
    + GDWS ($\beta=5$) & 88.09 & 54.48 & 43.3 & 125\\
    \bottomrule
  \end{tabular}
\end{table}

\subsection{Additional Comparisons with HYDRA}
In this section, we expand on the HYDRA \cite{sehwag2020hydra} comparison in Section~\ref{ssec:results} by: 1) providing additional GDWS networks obtained with different values of $\beta$ presented in Table~\ref{tab:comp-hydra-appndx}, 2) offering more insight to why HYDRA pruned networks achieve limited FPS improvement compared to their un-pruned baselines, and 3) explaining why GDWS accelerates HYDRA pruned networks without any loss in robustness. 

As seen in Section~\ref{ssec:results}, Table~\ref{tab:comp-hydra-appndx} shows that HYDRA pruned models do not achieve significant improvements in FPS compared to their un-pruned baselines. The reason is that, despite having arbitrarily sparse weight matrices, the filter sparsity is actually quite low. That is the number of prunable channels in HYDRA pruned models is small, especially for pruning ratios less than 95$\%$. To further demonstrate that effect, Figs~\ref{fig:stats-prune-vgg}~\&~\ref{fig:stats-prune-wrn} plot the per-layer filter sparsity of HYDRA pruned VGG-16 and WideResNet-28-4, respectively. These models are obtained from the publicly released CIFAR-10 HYDRA trained models available on GitHub. The plots indicate that only at \textit{extreme} pruning ratios such as 99$\%$ does the filter sparsity in both networks appear to be significant, which translates to some improvement in FPS on the Jetson.

\begin{table}
    \caption{Comparison between HYDRA \cite{sehwag2020hydra} and GDWS using VGG-16 and WRN-28-4 on CIFAR-10.}
      \label{tab:comp-hydra-appndx}
      \centering
  \begin{tabular}{l|ccc|c}
    \toprule
    \bf{Models} & \bf{$\calA_{\text{nat}}\ [\%]$} & \bf{$\calA_{\text{rob}}\ [\%]$}    & \bf{Size $[\text{MB}]$} &\bf{FPS}\\
    \midrule
    VGG-16 (AT from \cite{sehwag2020hydra})&82.72 & 51.93 & 58.4&36 \\
    + GDWS ($\beta=0.1$) &82.57 & 51.48 & 56.5 & 82 \\
    + GDWS ($\beta=0.5$) &82.53 &50.96 & 50.6 & 102 \\
    + GDWS ($\beta=1.2$) &81.41 & 47.88 & 44.2 & 111 \\
    \midrule
    VGG-16 ($p=90\%$) & 80.54& 49.44& 5.9&  36 \\
    + GDWS ($\beta=0.1$) &80.47& 49.52 & 31.5 & 93 \\
    + GDWS ($\beta=2$) &78.52& 47.26 & 26.9 & 101 \\
    \midrule
    VGG-16 ($p=95\%$) & 78.91& 48.74& 3.0&  36 \\
    + GDWS ($\beta=0.1$) &78.71& 48.53 & 18.3 & 106 \\
    + GDWS ($\beta=0.5$) &77.43& 46.99 & 17.1 & 117 \\
    \midrule
    VGG-16 ($p=99\%$) & 73.16& 41.74& 0.6&  41 \\
    + GDWS ($\beta=0.01$) &72.88& 41.79 & 3.0 & 130 \\
    + GDWS ($\beta=0.02$) &72.75& 41.56 & 2.9 & 136 \\
    \bottomrule
    \toprule
     WRN-28-4 (AT from \cite{sehwag2020hydra}) &85.35 & 57.23 & 22.3 &  17 \\
    + GDWS ($\beta=0.01$) &85.33 & 57.23 & 22.3 &53 \\
    + GDWS ($\beta=0.5$) &84.90 & 56.74 & 21.5 & 61 \\
    + GDWS ($\beta=1$) &84.17 &55.87& 20.5 & 68 \\
    \midrule
    WRN-28-4 ($p=90\%$) & 83.69& 55.20& 2.3&  17 \\
    + GDWS ($\beta=0.125$) &83.38& 54.79 &11.9 & 59 \\
    + GDWS ($\beta=0.4$) &81.21& 52.01 &11.4 & 65 \\
    \midrule
    WRN-28-4 ($p=95\%$) & 82.68& 54.18& 1.1&  17 \\
    + GDWS ($\beta=0.005$) &82.59& 54.22 & 7.2 & 60 \\
    + GDWS ($\beta=0.1$) &80.98& 52.60 & 6.9 & 65 \\
    \midrule
    WRN-28-4 ($p=99\%$) & 75.62& 47.21& 0.2&  28 \\
    + GDWS ($\beta=0.001$) &75.46& 47.30& 1.3 & 66 \\
    + GDWS ($\beta=0.0025$) &75.36& 47.04 & 1.2 & 68 \\
    \bottomrule
  \end{tabular}
\end{table}
\begin{figure}[htbp]
  \centering
    \subfloat[]{\includegraphics[height=4.4cm]{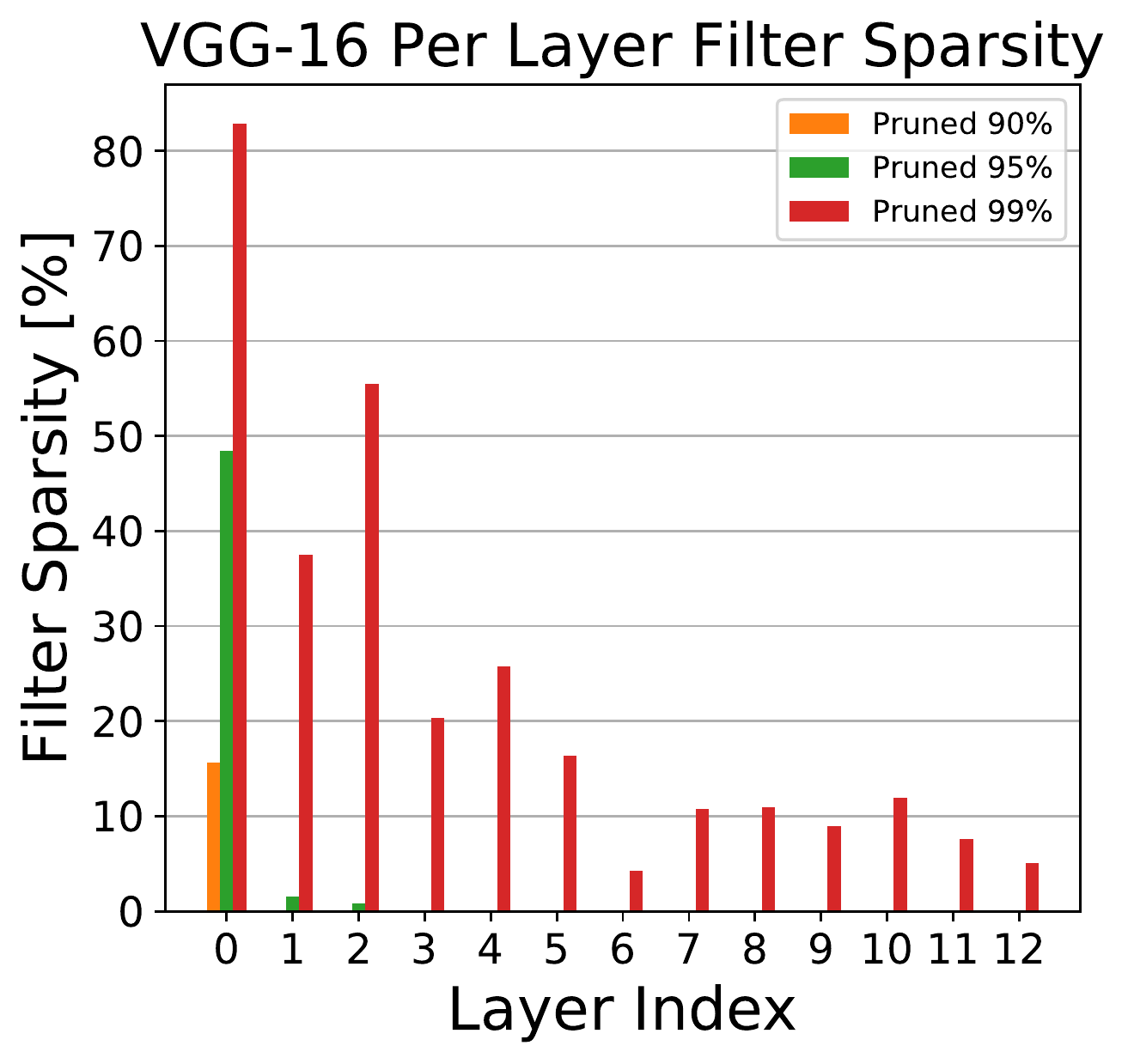}\label{fig:stats-prune-vgg}}%
    \qquad%
    \subfloat[]{\includegraphics[height=4.4cm]{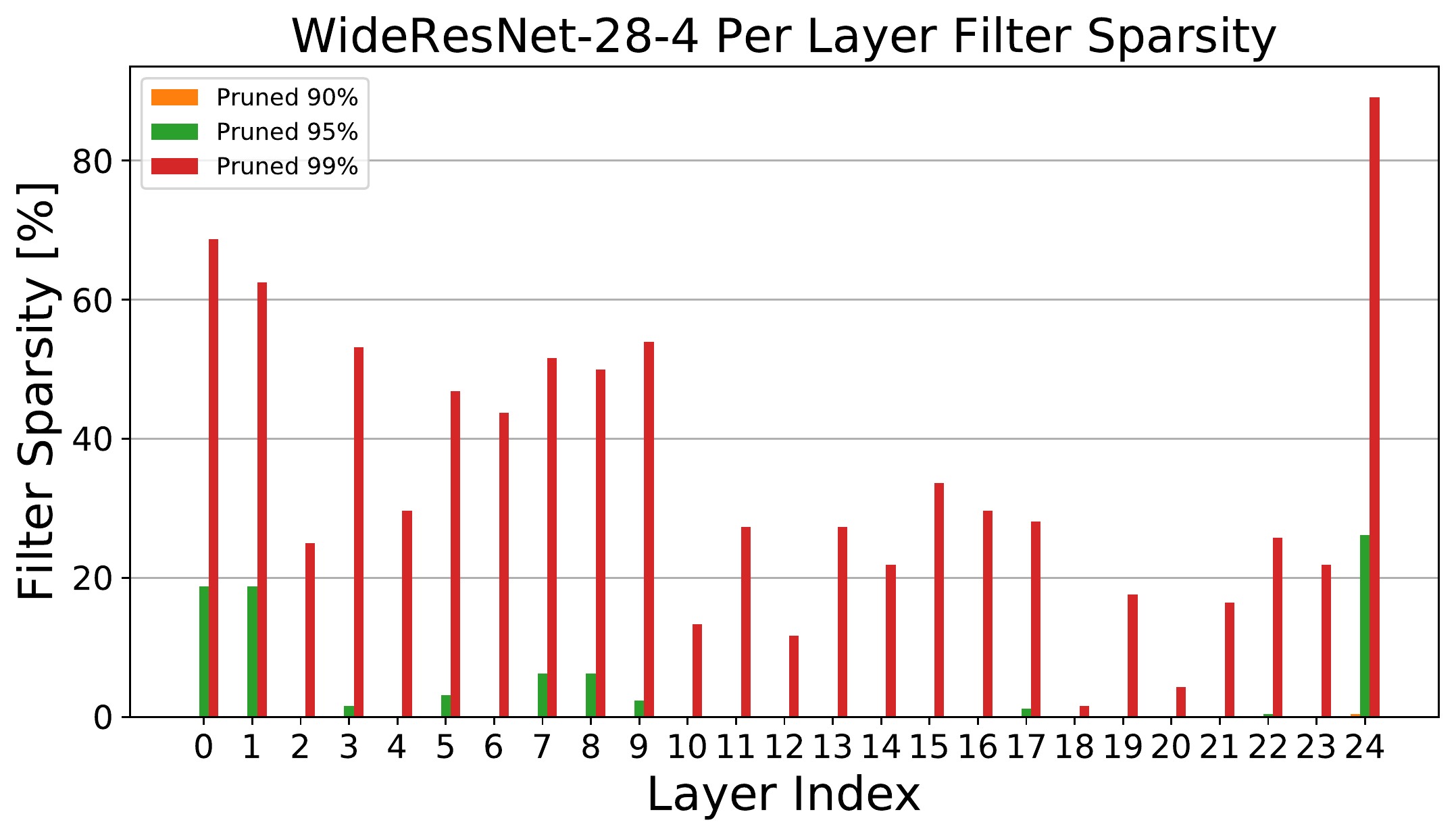}\label{fig:stats-prune-wrn}}%
  \caption{Per-layer filter sparsity of HYDRA pruned VGG-16 and WideResNet-28-4 models on CIFAR-10.}
\end{figure}
Table~\ref{tab:comp-hydra-appndx} also demonstrates that the application of GDWS to HYDRA-pruned networks provides significant improvement in FPS at iso-robustness when compared to the pruned baselines' numbers. Furthermore, the resultant GDWS networks are also sparse, which provide decent compression ratios. This synergy is due to the following observation: extremely sparse convolutional weight matrices have sub-matrices with low rank. This allows GDWS to transform standard sparse 2D convolutions into GDWS ones with no approximation error. The resultant GDWS convolutions are also sparse, which explains the improved compression ratios when compared to applying GDWS to un-pruned networks. To further understand this synergy, consider the following toy example: A standard $(3,2,4)$ 2D convolution with pruned weight matrix $\mtx{W} = [\mtx{W}_1|\mtx{W}_2|\mtx{W}_3]$:
\begin{equation}
    \mtx{W} = \begin{bmatrix} 
    w_1 & 0 & 0 & 0 &  0 & 0 & 0 & 0 & 0 & 0 & 0 & 0\\
    0 & 0 & 0 & 0 &  w_2 & 0 & 0 & 0 & 0 & 0 & 0 & 0\\ 
    0 & 0 & 0 & 0 &  0 & 0 & 0 & 0 & w_3 & 0 & 0 & 0\\ 
    0 & 0 & w_4 & 0 &  0 & 0 & 0 & 0 & 0 & 0 & 0 & 0\\ 
    \end{bmatrix}
\end{equation}
where:
\begin{equation}
    \mtx{W}_1 = \begin{bmatrix} 
    w_1 & 0 & 0 &0\\
    0 & 0 & 0 & 0 \\ 
    0 & 0 & 0 & 0\\ 
    0 & 0 & w_4 &0\\ 
    \end{bmatrix} \ \     \mtx{W}_2 = \begin{bmatrix} 
    0 & 0 & 0 &0\\
    w_2 & 0 & 0 & 0 \\ 
    0 & 0 & 0 & 0\\ 
    0 & 0 & 0 &0\\ 
    \end{bmatrix} \ \     \mtx{W}_3 = \begin{bmatrix} 
    0 & 0 & 0 &0\\
    0 & 0 & 0 & 0 \\ 
    w_3 & 0 & 0 & 0\\ 
    0 & 0 & 0 &0\\ 
    \end{bmatrix}
\end{equation}
and $w_i \neq 0 \ \forall i \in\{1,2,3,4\}$. Clearly, the weight matrx $\mtx{W}$ does not have an all zero row, which implies that the filter sparsity is zero, despite having a high sparsity rate of $100\times \frac{48-4}{48} = 91.\overline{6}\%$. However, we have that each sub-matrix $\mtx{W}_c$ has low rank. Specifically: $\text{rank}(\mtx{W}_1) = 2$ and $\text{rank}(\mtx{W}_2) = \text{rank}(\mtx{W}_3) = 1$, and computing the SVDs of each sub-matrix results in:
\begin{align}
\begin{split}
     \mtx{W}_1 &= \begin{bmatrix} 
    w_1 \\
    0  \\ 
    0 \\ 
    0 \\ 
    \end{bmatrix} \begin{bmatrix} 
    1 & 0 & 0 &0\\
    \end{bmatrix} + \begin{bmatrix} 
    0 \\
    0  \\ 
    0 \\ 
    w_4 \\ 
    \end{bmatrix} \begin{bmatrix} 
    0 & 0 & 1 &0\\
    \end{bmatrix} \\     \mtx{W}_2 &= \begin{bmatrix} 
    0 \\
    w_2  \\ 
    0 \\ 
    0 \\ 
    \end{bmatrix} \begin{bmatrix} 
    1 & 0 & 0 &0\\
    \end{bmatrix} \ \ \ \ \      \mtx{W}_3 = \begin{bmatrix} 
    0 \\
    0  \\ 
    w_3 \\ 
    0 \\ 
    \end{bmatrix} \begin{bmatrix} 
    1 & 0 & 0 &0\\
    \end{bmatrix}   
\end{split}
\end{align}
Thus, from Lemma~1, we can construct a $(3,2,\vc{g},4)$ GDWS convolution, where $\vc{g} = [2,1,1]^\text{T}$, without any approximation error. Decomposing into a GDW matrix and a PW matrix results in:
\begin{equation}
    \mtx{W}_\text{PW} = \begin{bmatrix} 
    w_1 & 0 & 0 &0\\
    0 & 0 & w_2 & 0 \\ 
    0 & 0 & 0 & w_3\\ 
    0 & w_4 & 0 &0\\ 
    \end{bmatrix} \ \     \mtx{W}_{\text{GDW}} = \begin{bmatrix} 
    1 & 0 & 0 & 0 &  0 & 0 & 0 & 0 & 0 & 0 & 0 & 0\\
    0 & 0 & 1 & 0 &  0 & 0 & 0 & 0 & 0 & 0 & 0 & 0\\ 
    0 & 0 & 0 & 0 &  1 & 0 & 0 & 0 & 0 & 0 & 0 & 0\\ 
    0 & 0 & 0 & 0 &  0 & 0 & 0 & 0 & 1 & 0 & 0 & 0\\ 
    \end{bmatrix}
\end{equation}
which are also sparse matrices. The total reduction in MACs is $\frac{4\times 4 + 4\times 4}{48} = 1.5\times$, and the total number of non-zero weights is $4+4=8$. This shows how extremely sparse standard 2D convolutions can be transformed into sparse GDWS convolutions with no approximation error while achieving improvements in complexity, which further justifies the synergy between GDWS and HYDRA pruned models.
\vfill
\pagebreak
\section{Proofs}\label{app:proofs}
In this section we provide proofs for Theorems~\ref{thm:approximation_comp} and \ref{thm:approximation_err} stated in Section~\ref{sec:gdws}. We first state the following result due to Eckart and Young \cite{eckart1936approximation} on low-rank matrix approximations:
\setcounter{lemma}{1}
\begin{lemma2}[Eckart-Young]\label{thm:ey}
Let $\mtx{A} \in \reals^{m\times n}$ be an arbitrary rank $r$ matrix with the singular value decomposition  $\mtx{A}=\mtx{U}\bm{\Sigma} \mtx{V}^{\text{\normalfont T}} = \sum_{i=1}^r\sigma_i \vc{u}_i \vc{v}_i^{\text{\normalfont T}}$, such that $\sigma_1 \geq \sigma_2 \geq ... \geq \sigma_r > 0$. Define for all $p\in\{1,2,...,r-1\}$\footnote{when $p=r$, the summation in \eqref{eq:ey-err} becomes undefined, but the error is zero.} the matrix $\hat{\mtx{A}}_p$:
\begin{equation} \label{eq:hard-thresh}
    \hat{\mtx{A}}_p = \sum_{i=1}^p\sigma_i \mtx{u}_i \vc{v}_i^{\text{\normalfont T}}
\end{equation}
Then $\hat{\mtx{A}}_p$ is the optimal rank $p$ approximation in both the following senses:
\begin{equation}
    \min_{\mtx{B}, \text{\normalfont rank}(\mtx{B})\leq p} ||\mtx{A}-\mtx{B}||_2 = \sigma_{p+1}
\end{equation}
\begin{equation}\label{eq:ey-err}
    \min_{\mtx{B}, \text{\normalfont rank}(\mtx{B})\leq p} ||\mtx{A}-\mtx{B}||_\text{\normalfont F} = \sqrt{\sum_{i=p+1}^r\sigma_{i}^2}
\end{equation}
\end{lemma2}
The Eckart-Young Lemma states that the truncated SVD can be used to compute the optimal rank $p$ approximation of any matrix in both the Frobenius norm and spectral norm sense. It also provides a closed form expression for the approximation error in terms of the singular values of the original matrix $\mtx{A}$. 
\subsection{Proof of Lemma~\ref{lemm:gdws_mtx_concat}}
\begin{lemma2}\label{lemm:gdws_mtx_concat-r} The weight matrix $\mtx{W}\in \reals^{M\times CK^2}$ of a $(C,K,\vc{g},M)$ GDWS convolution with $M>K^2$ can be expressed as the concatenation $[\mtx{W}_1|...|\mtx{W}_C]$ of $C$ sub-matrices $\mtx{W}_c\in \reals^{M\times K^2}$ such that $\text{\normalfont rank}(\mtx{W}_c) \leq \text{\normalfont{min}}(g_c,K^2)\ \forall c\in[C]$.
\end{lemma2}
\begin{proof}
From Property~\ref{prop:gdws_mtx}, $\mtx{W}$ is decomposed as:
\begin{equation}\label{eq:mtxw}
    \mtx{W} = [\mtx{W}_1|...|\mtx{W}_C] = \mtx{W}_\text{P} \mtx{W}_\text{D} = [\vc{u}_1 |...| \vc{u}_G]\times [\mtx{W}_{\text{D},1}| ... | \mtx{W}_{\text{D},C}]
\end{equation}
where $\vc{u}_i\in \reals^{M}$ are the column vectors of $\mtx{W}_\text{P}$ and $\mtx{W}_{\text{D},c}^{\text{T}} \in \reals^{K^2\times G}=[\vc{v}_{1,c}|...|\vc{v}_{G,c}]$ has column vectors $\vc{v}_{i,c} \in \reals^{K^2}$. From \eqref{eq:mtxw}, the sub-matrix $\mtx{W}_c\in \reals^{M\times K^2}$ is given by: 
\begin{align}\label{eq:sub-mtx}
    \mtx{W}_c = \mtx{W}_\text{P} \mtx{W}_{\text{D},c} = \sum_{i=1}^{G} \vc{u}_i \vc{v}^{\text{T}}_{i,c}
    = \sum_{i=1+h_c}^{h_c + g_c} \vc{u}_i \vc{v}^{\text{T}}_{i,c}
\end{align}
with $h_c=\sum_{k=1}^{c-1}g_k$ where we employ Property~2 to obtain the rightmost equality.
Therefore, $\mtx{W}_c$ is a sum of $g_c$ rank 1 matrices $\vc{u}_i \vc{v}^{\text{T}}_{i,c}\in\reals^{M\times K^2}$ which implies $\text{rank}(\mtx{W}_c) \leq \text{\normalfont{min}}(g_c,K^2)$. This concludes the proof.
\end{proof}

\subsection{Proof of Theorem~\ref{thm:approximation_comp}}
\textbf{Definition}: The \emph{weighted approximation error} between two matrices
$\mtx{W} = [\mtx{W}_1| ... | \mtx{W}_C]$ and $\mtx{Q} = [\mtx{Q}_1| ... | \mtx{Q}_C]$ is 
\begin{align}
    e(\mtx{W},\mtx{Q},\bm{\alpha}) &= \sqrt{\sum_{c=1}^C \alpha_c||\mtx{W}_c-\mtx{Q}_c||^2_\text{F}}
\end{align}
where $\bm{\alpha} \in \reals_+^C$ and all sub-matrices $\mtx{W}_c$ and $\mtx{Q}_c$ have the same size.

We first prove the following Lemma:
\begin{lemma}\label{lemma:approx}
Given any $(C,K,M)$ standard 2D convolution with weight matrix $\mtx{W} = [\mtx{W}_1| ... | \mtx{W}_C] \in\reals^{M\times CK^2}$,  $\mtx{W}_c \in \reals^{M\times K^2}$ and $K^2 < M$, the $(C,K,\vc{g},M)$ GDWS approximation with weight matrix $\hat{\mtx{W}}$ and fixed channel distribution vector $\vc{g}$ that minimizes the weighted approximation error $e(\mtx{W},\hat{\mtx{W}},\bm{\alpha})$ with $\bm{\alpha} \in \reals_+^C$ is obtained via the concatenation $\hat{\mtx{W}} = [\hat{\mtx{W}}_1 | ... | \hat{\mtx{W}}_C]$, where $\hat{\mtx{W}}_c$ is the optimal rank $g_c$ approximation of $\mtx{W}_c$. 
\end{lemma}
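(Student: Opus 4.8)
The plan is to reduce the problem to $C$ independent low-rank approximation problems, one per input-channel block, and then invoke the Eckart--Young Lemma (Lemma~\ref{thm:ey}) on each block. The first step is to characterize the feasible set: for a \emph{fixed} distribution vector $\vc{g}$ (with $g_c \le K^2$), I claim the admissible GDWS weight matrices are exactly the concatenations $\hat{\mtx{W}} = [\hat{\mtx{W}}_1 | \dots | \hat{\mtx{W}}_C]$ with $\hat{\mtx{W}}_c \in \reals^{M\times K^2}$ and $\text{rank}(\hat{\mtx{W}}_c) \le g_c$ for every $c$. The forward inclusion is precisely Lemma~\ref{lemm:gdws_mtx_concat}. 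For the converse (realizability) I would take any tuple of blocks obeying the rank bounds, write each $\hat{\mtx{W}}_c$ as a sum of $g_c$ rank-one terms $\sum_i \vc{u}_i \vc{v}_{i,c}^{\text{T}}$, and reassemble these outer products into a PW matrix $\mtx{W}_\text{P}$ (whose columns are the $\vc{u}_i$) and a block-diagonal GDW matrix $\mtx{W}_\text{D}$ (whose rows are the $\vc{v}_{i,c}^{\text{T}}$ placed in the block for channel $c$), exactly inverting the construction in the proof of Lemma~\ref{lemm:gdws_mtx_concat}. This exhibits a genuine $(C,K,\vc{g},M)$ GDWS convolution realizing the prescribed blocks.

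Second, I would exploit the separable structure of the objective. Because the squared weighted error
\begin{equation}
    e(\mtx{W},\hat{\mtx{W}},\bm{\alpha})^2 = \sum_{c=1}^C \alpha_c \|\mtx{W}_c - \hat{\mtx{W}}_c\|_\text{F}^2
\end{equation}
is a sum of nonnegative terms whose $c$-th summand depends only on the block $\hat{\mtx{W}}_c$, and because the feasible set is a Cartesian product of per-block rank constraints, the minimization decouples: minimizing the sum is equivalent to minimizing each term $\alpha_c \|\mtx{W}_c - \hat{\mtx{W}}_c\|_\text{F}^2$ separately over $\{\hat{\mtx{W}}_c : \text{rank}(\hat{\mtx{W}}_c) \le g_c\}$. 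Since $\alpha_c > 0$ is a fixed constant, it does not affect the $\argmin$ of the $c$-th block, so each block reduces to the classical problem $\min\{\|\mtx{W}_c - \hat{\mtx{W}}_c\|_\text{F} : \text{rank}(\hat{\mtx{W}}_c) \le g_c\}$. Eckart--Young (Lemma~\ref{thm:ey}) then identifies the unique minimizer as the rank-$g_c$ truncated SVD of $\mtx{W}_c$, which is exactly the claimed $\hat{\mtx{W}}_c$. Concatenating these optimal blocks yields the global optimum, and by the realizability argument it is attainable by an actual GDWS convolution with distribution $\vc{g}$.

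The main obstacle I anticipate is the feasibility/realizability characterization rather than the optimization itself. Lemma~\ref{lemm:gdws_mtx_concat} only supplies one direction (every GDWS matrix has blocks of bounded rank); the crux is the converse, namely that \emph{every} concatenation with $\text{rank}(\hat{\mtx{W}}_c) \le g_c$ actually arises from some valid $(C,K,\vc{g},M)$ GDWS convolution. Without this, the truncated-SVD concatenation would only furnish a lower bound on the error that might not be realizable within the GDWS class. Once the converse is established via the rank-one-decomposition construction, the separability and Eckart--Young steps are routine, so the weighting $\bm{\alpha}$ and the apparent coupling across channels present no real difficulty.
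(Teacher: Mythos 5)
Your proposal is correct and follows essentially the same route as the paper: decouple the weighted squared error into $C$ independent per-block rank-constrained problems (using Lemma~\ref{lemm:gdws_mtx_concat} to bound the block ranks) and apply Eckart--Young to each block. The only difference is that you make explicit the realizability converse --- that any concatenation of blocks with $\text{rank}(\hat{\mtx{W}}_c)\le g_c$ is attainable by an actual $(C,K,\vc{g},M)$ GDWS convolution --- which the paper leaves implicit in the rank-one decomposition of Lemma~\ref{lemm:gdws_mtx_concat}; this is a sound and slightly more careful presentation of the same argument.
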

\begin{proof}
Since $\mtx{W}_c \in \reals^{M\times K^2}$ with $K^2 < M$, we have $\text{rank}(\mtx{W}_c) = r_c \leq K^2$. Let $\mtx{Q}= [\mtx{Q}_1| ... | \mtx{Q}_C]\in \reals^{M\times CK^2}$ be the weight matrix of a $(C,K,\vc{g},M)$ GDWS convolution. Then, from Lemma~\ref{lemm:gdws_mtx_concat}, we have that $\text{rank}(\mtx{Q}_c) \leq \text{\normalfont{min}}(g_c,K^2)$. Without loss of generality, we will always assume $g_c \leq r_c \leq K^2$, since otherwise $g_c > r_c$ for some $c$ implies the optimal rank $g_c$ approximation of $\mtx{W}_c$ is $\hat{\mtx{W}}_c = \mtx{W}_c$ resulting in at most $r_c$ non-zero DW kernels in the $c^\text{th}$ channel and $g_c - r_c$ zero DW kernels.

Then, from the Eckart-Young Lemma, we obtain:
\begin{equation}\label{eq:trncsvd}
    ||\mtx{W}_c-\mtx{Q}_c||_\text{\normalfont F} \geq \sqrt{\sum_{i=g_c+1}^{r_c}\sigma_{i,c}^2} = ||\mtx{W}_c-\hat{\mtx{W}}_c||_\text{\normalfont F}
\end{equation}
where $\sigma_{1,c} \geq \sigma_{2,c} \geq ... \geq \sigma_{r_c,c} > 0$ are the singular values of $\mtx{W}_c$ $\forall c\in [C]$ and $\hat{\mtx{W}}_c= \sum_{i=1}^{g_c}\sigma_{i,c} \vc{u}_{i,c} \vc{v}_{i,c}^{\text{\normalfont T}}$ is its rank $g_c$ truncated SVD.  The equality holds if and only if $\mtx{Q}_c =\hat{\mtx{W}}_c$. 

For a fixed $\vc{g}$, we have:
\begin{equation}
    e^2(\mtx{W},\mtx{Q},\bm{\alpha}) =  \sum_{c=1}^C \alpha_c||\mtx{W}_c-\mtx{Q}_c||^2_\text{F} \geq \sum_{c=1}^C \alpha_c\sum_{i=g_c+1}^{r_c}\sigma_{i,c}^2 = e^2(\mtx{W},\hat{\mtx{W}},\bm{\alpha})
\end{equation}
where $\hat{\mtx{W}} = [\hat{\mtx{W}}_1| ... | \hat{\mtx{W}}_C]$. This completes the proof since minimizing $e^2$ also minimizes $e$.
\end{proof}

We now prove Theorem~\ref{thm:approximation_comp}:
\begin{theorem2}\label{thm:approximation_compr}  Given a $(C,K,M)$ standard 2D convolution with weight matrix $\mtx{W}$, the $(C,K,\vc{g},M)$ GDWS approximation with weight matrix $\hat{\mtx{W}}$ that minimizes the error in \eqref{eq:err-exp} subject to $\sum g_c = G\leq \gamma$ (for some $\gamma \in \integers_+$), can be obtained in polynomial time via Algorithm~\ref{alg:approx1}.  
\end{theorem2}

\begin{proof}
We want to show that:
\begin{equation}\label{eq:thm1-restated}
    \hat{\mtx{W}} = \argmin_{\mtx{Q}:\ G\leq \gamma}  e(\mtx{W},\mtx{Q},\bm{\alpha})
\end{equation}
can be solved optimally for any $\bm{\alpha}\in \reals_+^{C}$. We show this using an induction on the constraint $\gamma$ via a constructive proof, which provides the basis for Algorithm~\ref{alg:approx1}. Essentially, we show that solving \eqref{eq:thm1-restated} with constraint $\gamma+1$ can be obtained from the solution of  \eqref{eq:thm1-restated} with constraint $\gamma$ via a 1D search over the channels $C$, and establish the base case for when $\gamma = 1$.

Without any loss of generality, we will assume that $\alpha_c >0\ \forall c\in [C]$. The reason for this is that if $\alpha_c = 0$ for a particular $c$, then we can set $\hat{\mtx{W}}_c = \vc{0}$ in the optimal solution and have $g_c = 0$ which minimizes the complexity and does not contribute to the error expression. Similar to before, let $\mtx{W} = [\mtx{W}_1| ... | \mtx{W}_C]$ be the concatenation of $C$ sub-matrices. We have $\text{rank}(\mtx{W}_c) = r_c \leq K^2$. Let the SVD of each sub-matrix be:
\begin{equation}
    \mtx{W}_c=\mtx{U}_c\bm{\Sigma}_c \mtx{V}_c^{\text{\normalfont T}} = \sum_{i=1}^{r_c}\sigma_{i,c} \vc{u}_{i,c} \vc{v}_{i,c}^{\text{\normalfont T}}
\end{equation}
where $\sigma_{1,c} \geq \sigma_{2,c} \geq ... \geq \sigma_{r_c,c} > 0$ are the singular values of $\mtx{W}_c$ $\forall c\in [C]$. 

Assume that $\hat{\mtx{W}}^{(\gamma)}$ is the optimal solution to \eqref{eq:thm1-restated} with constraint $\gamma$, that is $\hat{\mtx{W}}^{(\gamma)}$ corresponds to a $(C,K,\vc{g}^{(\gamma)},M)$ GDWS convolution with channel distribution vector $\vc{g}^{(\gamma)} \in \integers_+^C$ such that $G^{(\gamma)} = \sum g_c^{(\gamma)} \leq \gamma$. From Lemma~\ref{lemma:approx}, we have $\hat{\mtx{W}}^{(\gamma)} = [\hat{\mtx{W}}^{(\gamma)}_1| ... | \hat{\mtx{W}}^{(\gamma)}_C]$ such that $\text{rank}(\hat{\mtx{W}}^{(\gamma)}_c) \leq g^{(\gamma)}_c\leq r_c$, with optimal weighted approximation error:
\begin{equation}
    e^2(\mtx{W},\hat{\mtx{W}}^{(\gamma)},\bm{\alpha}) =  \sum_{c=1}^C \alpha_c\sum_{i=g^{(\gamma)}_c+1}^{r_c}\sigma_{i,c}^2
\end{equation}
Then, solving \eqref{eq:thm1-restated}, with constraint $G \leq \gamma +1$ will result in a $(C,K,\vc{g}^{(\gamma+1)},M)$ GDWS convolution such that the channel distribution vector $\vc{g}^{(\gamma+1)}$ will differ from $\vc{g}^{(\gamma)}$ in at most one position $c'\in [C]$, such that $g^{(\gamma +1)}_{c'}=g^{(\gamma)}_{c'} +1$. The reason for this is that: 1) $G^{(\gamma+1)} \geq G^{(\gamma)}$, otherwise the optimal solution for the $\gamma$ constraint could be improved; and 2) the integer constraints on both $G^{(\gamma+1)}$ and $G^{(\gamma)}$ imply that their difference can be at most 1, and hence the corresponding vectors will be identical up to one position. Thus, the optimal approximation error with constraint $\gamma + 1$ can be computed from $e^2(\mtx{W},\hat{\mtx{W}}^{(\gamma)},\bm{\alpha})$:
\begin{equation}
    e^2(\mtx{W},\hat{\mtx{W}}^{(\gamma+1)},\bm{\alpha}) =  e^2(\mtx{W},\hat{\mtx{W}}^{(\gamma)},\bm{\alpha}) - \max_{c \in [C]:  g^{(\gamma)}_{c} < r_c  }\alpha_c \sigma_{g^{(\gamma)}_{c}+1,c}^2
\end{equation}
where the maximization is taken over all channels $c$ that are not saturated (that is $g^{(\gamma)}_{c}+1 \leq r_c$ is valid). If no such channels exist, then the approximation error is saturated, and there is no point in increasing complexity further, which implies $\vc{g}^{(\gamma + 1)} = \vc{g}^{(\gamma)}$. 
Therefore, we can construct the optimal channel distribution vector $\vc{g}^{(\gamma +1)}$ from $\vc{g}^{(\gamma)}$ as previously mentioned, and then use Lemma~\ref{lemma:approx} to find $\hat{\mtx{W}}^{(\gamma +1)}$.

Lastly, we show how to solve \eqref{eq:thm1-restated} for the smallest constraint $\gamma=1$, which establishes the base case, and thus concludes the proof. Notice that, if $\gamma=1$, then $G=1$, and $\vc{g}$ reduces to the basis vector $\vc{e}_c$ (vector of all zeros except for one position $c$ such that $e_c=1$). Thus the optimal GDWS approximation with $G=1$ can be solved by simply searching for the channel $c$ that maximizes $\alpha_c \sigma_{1,c}^2$, and then use Lemma~\ref{lemma:approx} to find $\hat{\mtx{W}}^{(1)}$.
\end{proof}

\subsection{Proof of  Theorem~\ref{thm:approximation_err}}
\begin{theorem2}\label{thm:approximation_errr} 
Given a $(C,K,M)$ standard 2D convolution with weight matrix $\mtx{W}$, the $(C,K,\vc{g},M)$ GDWS approximation with weight matrix $\hat{\mtx{W}}$ that minimizes the complexity in \eqref{eq:gdws-comp} subject to $e(\mtx{W},\mtx{Q},\bm{\alpha})\leq \beta$ (for some $\beta \geq 0$), can be constructed in polynomial time via Algorithm~\ref{alg:approx2}. 
\end{theorem2}
\begin{proof}
We want to show that:
\begin{equation}\label{eq:thm2-restated}
    \hat{\mtx{W}} = \argmin_{\mtx{Q}:\ e(\mtx{W},\mtx{Q},\bm{\alpha})\leq \beta}  \sum_{c=1}^Cg_c
\end{equation}
can be solved for any weight error vector $\bm{\alpha}\in\reals_+^C$ in polynomial time. We show this by first applying a re-formulation of both the objective and the constraint as a function of a single binary vector. Using this new formulation, we show that solving \eqref{eq:thm2-restated} reduces to a greedy approach, captured in Algorithm~\ref{alg:approx2}, consisting of a simple 1D search over sorted quantities.

Similar to before, let $\mtx{W} = [\mtx{W}_1| ... | \mtx{W}_C]$ be the concatenation of $C$ sub-matrices. We have $\text{rank}(\mtx{W}_c) = r_c \leq K^2$. Let the SVD of each sub-matrix be:
\begin{equation}
    \mtx{W}_c=\mtx{U}_c\bm{\Sigma}_c \mtx{V}_c^{\text{\normalfont T}} = \sum_{i=1}^{r_c}\sigma_{i,c} \vc{u}_{i,c} \vc{v}_{i,c}^{\text{\normalfont T}}
\end{equation}
where $\sigma_{1,c} \geq \sigma_{2,c} \geq ... \geq \sigma_{r_c,c} > 0$ are the singular values of $\mtx{W}_c$ $\forall c\in [C]$. Furthermore, without loss of generality we will assume that $\alpha_{c}>0$ $\forall c \in [C]$. For a fixed channel distribution vector $\vc{g}$, weight error vector $\bm{\alpha}$ and convolution matrix $\mtx{W}$, Lemma~\ref{lemma:approx} states that the optimal GDWS approximation error can be computed via:
\begin{equation} \label{eq:gdws-err}
     e^2(\vc{g}) = e^2(\mtx{W},\hat{\mtx{W}},\bm{\alpha}) =  \sum_{c=1}^C \alpha_c\sum_{i=g_c+1}^{r_c}\sigma_{i,c}^2
\end{equation}
Therefore, for any $\beta \geq 0$, there always exists a GDWS convolution satisfying $e(\vc{g}) \leq \beta$. A simple choice of $g_c = r_c\ \forall c \in [C]$ will result in $e(\vc{r}) = 0 \leq \beta$, where $\vc{r}\in \integers_+^C$ is the vector of sub-matrix ranks $r_c$'s. The goal is to find the least complex GDWS convolution, satisfying the constraint. 

Let $\calA$ be an ordered set of all  $R = \sum r_c$ quantities $\alpha_c \sigma^2_{i,c}$. Define an indexing $k \in [R]$ on $\calA$ where $a_k \in \calA$ corresponds to a unique pair $(i,c)$ such that $a_k = \alpha_c \sigma^2_{i,c}$ and $a_1 \geq a_2 \geq ... \geq a_R > 0$. By doing so, we can re-write the error expression \eqref{eq:gdws-err}:
\begin{equation} \label{eq:gdws-err-2}
     e^2(\vc{g}) =  \sum_{c=1}^C \alpha_c\sum_{i=g_c+1}^{r_c}\sigma_{i,c}^2 = \sum_{k=1}^{R}a_k t_k
\end{equation}
where $t_k \in \{0,1\}$ are binary variables indicating whether the corresponding pair $(i,c)$ exists in the original sum in \eqref{eq:gdws-err}. This change of variables facilitates the optimization problem in \eqref{eq:thm2-restated}, since the binary vector $\vc{t}\in \{0,1\}^R$ can be used to enumerate all possible GDWS approximations with a simple expression of the optimal error in \eqref{eq:gdws-err-2}. Another useful thing about this re-formulation is the following property:
\begin{equation} \label{eq:gdws-complexity}
    \sum_{c=1}^C g_c = \sum_{k=1}^R\overline{t}_k = G
\end{equation}
where $\overline{t}_k = |1-t_k|$ is the flipped binary variable. 
Using the fact that the $\{a_k\}$'s are sorted in descending order, let $j \in [R]$ be the smallest index such that:
\begin{equation}
    \sum_{k=j+1}^{R} a_k \leq \beta^2
\end{equation}
Then setting $t_k = 1$ $\forall k > j$ and $t_k = 0$ otherwise, will result in the least complex (least sum $\sum \overline{t}_k$) GDWS approximation satisfying the error constraint $e\leq \beta$. Finding the index $j$ can be done via a simple 1D search, by starting with $j=R$ (corresponding to the zero error case), and keep decrementing $j$ until the error condition is no longer satisfied. After finding the optimal vector $\vc{t}$, the corresponding unique channel distribution vector can be constructed via the index mapping:
\begin{equation}
    g_c = \sum_{k \in \calK_c} \overline{t}_k
\end{equation}
where $\calK_c \subset [R]$ is the set of indices $k$ such that the corresponding index pair $(i,c')$ satisfies $c'=c$. Finally, given the channel distribution vector $\vc{g}$, we can use Lemma~\ref{lemma:approx} to construct $\hat{\mtx{W}}$. The greedy algorithm presented in Algorithm~\ref{alg:approx2} computes $\vc{g}$ via this approach, but without dealing with the auxiliary indexing and reformulation. 
\end{proof}
\vfill
\pagebreak
\section{Rationale for the Weight Error Vector Expression}\label{app:sensitivity}

In this section, we provide a detailed explanation for our choice of $\bm{\alpha}_l$ in \eqref{eq:alpha}. The work of \cite{sakr2017analytical} presents theoretical bounds on the accuracy of neural networks, in the presence of quantization noise due to quantizing both weights and activations, to determine the minimum precision required to maintain accuracy. A follow-up work \cite{sakr2018analytical} extends this bound to the per-layer precision case, allowing for better complexity-accuracy trade-offs. The bound in \cite{sakr2017analytical} in fact is much more general, and is not restricted to neural network quantization. Consider the following \textit{scalar} additive perturbation model:
\begin{equation} \label{eq:model}
    \hat{w} = w + \eta_w 
\end{equation}
where $\eta_w$ is assumed to be a zero-mean, symmetric and independently distributed scalar random variable with variance $s^2$. Then the work of \cite{sakr2017analytical,sakr2018analytical} shows that the probability $p_{\text{m}}$ that the noisy network $\hat{f}$ paramerterized by $\hat{w}$ differs in its decision from $f$ can be upper bounded as follows:
\begin{equation}\label{eq:mismatch}
    p_{\text{m}} \leq \sum_{l = 1}^{L} \sum_{c = 1}^{C_l}s^2_{c,l} E_{c,l} \ \ \ \ \text{where} \ \ \ \ E_{c,l} = \mathbb{E}\left[ \sum_{\substack{j=1\\ j \neq n_x}}^N \frac{\sum_{w \in \calW^{(l)}_c}\left|\frac{\partial \delta_{x,j}}{\partial w}\right|^2}{2\delta_{x,j}^2}\right]
\end{equation}
where the following notation, inherited from Section~\ref{ssec:gdws-networks}, is used: Let $f: \reals^D \rightarrow \reals^{N}$ be a pre-trained CNN for an $N$-way classification problem with $L$ convolutional layers parameterizd by weight matrices $\mtx{W}^{(l)} \in \reals^{M_l\times C_lK_l^2}$. The CNN $f$ operates on a $D$-dimensional input vector $\vc{x}$ to produce a vector $\vc{z}=f(\vc{x})$ of soft outputs or logits. Denote by $n_x \in [N]$ the predicted class label associated with $\vc{x}$, and define $\delta_{x,j} = z_j - z_{n_x}$ to be the soft output differences $\forall j\in[N]\setminus\{n_x\}$. \textbf{In addition}, define $\calW^{(l)}_c$ to be the set containing all the $C_lK_l^2$ scalar entries of sub-matrix  $\mtx{W}^{(l)}_c$ $\forall c \in [C_l]$ $\forall l \in [L]$, that is the cardinality of $\calW^{(l)}_c$ is $C_lK_l^2$. Using this notation, $\calW^{(l)} = \bigcup_c \calW^{(l)}_c$ is essentially the set of all scalar parameters in the $l^{\text{th}}$ convolutional layer, and $\calW = \bigcup_l \calW^{(l)}$ is the set of all scalar parameters of $f$ across all convolutional layers.

When approximating standard 2D convolutions with GDWS convolutions, we incur approximation errors that are captured at the sub-matrix level, and not at the entry level. Let $\mtx{W}^{(l)} = [\mtx{W}^{(l)}_1|\mtx{W}^{(l)}_2| ... |\mtx{W}^{(l)}_C]$ be the weight matrix, and its corresponding sub-matrices, of the standard convolution for layer $l$. Define $r_c^{(l)} = \text{rank}(\mtx{W}^{(l)}_c) \leq {K_l}^2$. Similarly, let $\mtx{Q}^{(l)} = [\mtx{Q}^{(l)}_1|\mtx{Q}^{(l)}_2| ... |\mtx{Q}^{(l)}_C]$ be the weight matrix, and its corresponding sub-matrices, of the GDWS convolution approximation for layer $l$. From Lemma~\ref{lemm:gdws_mtx_concat} we know that $\text{rank}(\mtx{Q}^{(l)}_c) = g_c^{(l)}$. Then, based on the proofs in Appendix~\ref{app:proofs}, the sub-matrix approximation error can be expressed as:
\begin{equation}
    e^{(l)}_c = ||\mtx{W}^{(l)}_c - \mtx{Q}^{(l)}_c||_\text{F}  =||\mtx{R}^{(l)}_c||_\text{F} =  \sqrt{\sum_{i=g_c^{(l)}+1}^{r^{(l)}_c}{\sigma_{i,c}^{(l)}}^2}
\end{equation}
where $\sigma_{1,c}^{(l)} \geq \sigma_{2,c}^{(l)} \geq ... \geq \sigma_{r_c^{(l)},c}^{(l)} > 0$ are the singular values of $\mtx{W}^{(l)}_c$ $\forall c\in [C_l]$ $\forall l\in[L]$. Clearly, the setup in \cite{sakr2017analytical,sakr2018analytical} does not hold here. However, we circumvent this issue by assuming that for all entries $w \in \calW^{(l)}_c$, the additive perturbation model in \eqref{eq:model} holds where $\eta_w$ are additive, zero-mean, symmetric, independent random variables with variance:
\begin{equation}
    \mathbb{E}\left[{\eta_w}^2\right] = \frac{||\mtx{R}^{(l)}_c||^2_\text{F}}{M_lK_l^2} \ \ \ \ \forall c\in[C_l] \ \forall l \in [L]
\end{equation}
While this assumption does not hold, it allows us to use the upper bound in \eqref{eq:mismatch} to provide a \textbf{heuristic} in our setup:
\begin{align}
    \begin{split}
        p_{\text{m}} &\leq \sum_{l = 1}^{L} \sum_{c = 1}^{C_l}s^2_{c,l} E_{c,l}  \\
        &= \sum_{l = 1}^{L} \sum_{c = 1}^{C_l}\frac{||\mtx{R}^{(l)}_c||^2_\text{F}}{M_lK_l^2}\ \mathbb{E}\left[ \sum_{\substack{j=1\\ j \neq n_x}}^N \frac{\sum_{w \in \calW^{(l)}_c}\left|\frac{\partial \delta_{x,j}}{\partial w}\right|^2}{2\delta_{x,j}^2}\right] \\
        &= \sum_{l = 1}^{L} \sum_{c = 1}^{C_l}\frac{||\mtx{R}^{(l)}_c||^2_\text{F}}{M_lK_l^2}\ \mathbb{E}\left[ \sum_{\substack{j=1\\ j \neq n_x}}^N \frac{||\mtx{D}^{(c,l)}_{x,j}||^2_\text{F}}{2\delta_{x,j}^2}\right] \\
        &= \sum_{l = 1}^{L} \sum_{c = 1}^{C_l} \alpha_{c,l}  ||\mtx{R}^{(l)}_c||^2_\text{F} \\
        &= \sum_{l = 1}^{L} e(\mtx{W}^{(l)}, \mtx{Q}^{(l)},\bm{\alpha}_l)^2
    \end{split}
\end{align}
where $\alpha_{c,l}$ is the same as before, with the definition $\mtx{D}^{(c,l)}_{x,j} \in \reals^{M_l \times K_l^2}$ being the derivative of $\delta_{x,j}$ w.r.t. the sub-matrix $\mtx{W}^{(l)}_c$. Thus, the upper bound on $ p_{\text{m}}$ results in a sum of $L$ terms, where each term is the GDWS approximation error. Following \cite{sakr2018analytical}, we use noise gain equalization to minimize this sum. That is we make sure all the terms are of comparable magnitude by upper-bounding them with the same $\beta$ when using Algorithm~\ref{alg:approx2}.
\end{document}